\documentclass{article}

\PassOptionsToPackage{numbers, compress}{natbib}



\usepackage[final]{neurips_2025}

\usepackage{microtype}
\usepackage{graphicx}
\usepackage{subfigure}
\usepackage{booktabs} 
\usepackage{hyperref}

\usepackage{multirow}
\usepackage{amssymb}
\usepackage{algorithmic}
\usepackage{mathtools}
\usepackage{amsthm}
\usepackage{amsmath}
\usepackage[capitalize,noabbrev]{cleveref}
\usepackage[utf8]{inputenc} 
\usepackage[T1]{fontenc}    
\usepackage{nicefrac}       
\usepackage[table]{xcolor}         
\usepackage{latexsym}
\usepackage{makecell}
\usepackage{enumitem}
\usepackage{graphicx}
\usepackage{setspace}
\usepackage{subfigure}
\usepackage{latexsym}
\usepackage{inconsolata}
\usepackage{bm}
\usepackage{wrapfig}
\usepackage{arydshln}
\usepackage{hyperref}
\usepackage{url}
\usepackage{pifont}
\usepackage{bbding}
\usepackage[misc]{ifsym}
\usepackage{microtype}
\usepackage{setspace}
\usepackage{tcolorbox}
\usepackage{hyperref}
\tcbuselibrary{theorems}

\definecolor{cpgcolor}{hsb}{0.7, 0.1, 0.98}

\newcommand{\cmark}{\ding{51}}%
\newcommand{\xmark}{\ding{55}}%

\newtheorem{theorem}{Theorem}
\newtheorem{definition}{Definition}

\newtcolorbox{DefinitionBox}{
  colback=blue!5,
  colframe=blue!80,
  boxrule=0.5pt,
  arc=2pt,
  left=2pt,
  right=2pt,
  top=2pt,
  bottom=2pt,
}

\newtcolorbox{CorollaryBox}{
  colback=gray!5,
  colframe=gray!80,
  boxrule=0.5pt,
  arc=2pt,
  left=2pt,
  right=2pt,
  top=2pt,
  bottom=2pt,
}

\title{Toward Relative Positional Encoding \\ in Spiking Transformers}

\author{%
\textbf{Changze Lv}$^{1}$\footnotemark[1] \quad
\textbf{Yansen Wang}$^{2}$\footnotemark[2] \quad
\textbf{Dongqi Han}$^{2}$\footnotemark[2] \quad
\textbf{Yifei Shen}$^{2}$ \\
\textbf{Xiaoqing Zheng}$^{1}$\footnotemark[2] \quad
\textbf{Xuanjing Huang}$^{1}$ \quad
\textbf{Dongsheng Li}$^{2}$ \\
$^1$College of Computer Science and Artificial Intelligence, Fudan University \\ 
$^2$Microsoft Research Asia \\
\texttt{\{czlv24\}@m.fudan.edu.cn},
\texttt{\{zhengxq,xjhuang\}@fudan.edu.cn},\\
\texttt{\{yansenwang,dongqihan,dongsli\}@microsoft.com} \\
}

\begin{document}

\maketitle

\footnotetext[1]{The work was conducted during the internship of Changze Lv at Microsoft Research Asia.}
\footnotetext[2]{Corresponding authors.}

\begin{abstract}
Spiking neural networks (SNNs) are bio-inspired networks that mimic how neurons in the brain communicate through discrete spikes, which have great potential in various tasks due to their energy efficiency and temporal processing capabilities.
SNNs with self-attention mechanisms (spiking Transformers) have recently shown great advancements in various tasks, and inspired by traditional Transformers, several studies have demonstrated that spiking absolute positional encoding can help capture sequential relationships for input data, enhancing the capabilities of spiking Transformers for tasks such as sequential modeling and image classification. However, how to incorporate relative positional information into SNNs remains a challenge.
In this paper, we introduce several strategies to approximate relative positional encoding (RPE) in spiking Transformers while preserving the binary nature of spikes.
Firstly, we formally prove that encoding relative distances with Gray Code ensures that the binary representations of positional indices maintain a constant Hamming distance whenever their decimal values differ by a power of two, and we propose \textbf{Gray-PE} based on this property.
In addition, we propose another RPE method called \textbf{Log-PE}, which combines the logarithmic form of the relative distance matrix directly into the spiking attention map.
Furthermore, we extend our RPE methods to a two-dimensional form, making them suitable for processing image patches.
We evaluate our RPE methods on various tasks, including time series forecasting, text classification, and patch-based image classification, and the experimental results demonstrate a satisfying performance gain by incorporating our RPE methods across many architectures.
Our results provide fresh perspectives on designing spiking Transformers to advance their sequential modeling capability, thereby expanding their applicability across various domains.
Our code is available at \url{https://github.com/microsoft/SeqSNN}.
\end{abstract}

\section{Introduction}
\label{sec:intro}
Spiking Neural Networks (SNNs) \citep{Maas1997NetworksOS} are a class of bio-inspired models designed to emulate the communication process of biological neurons, which transmit information through discrete spikes.
In contrast to artificial neural networks (ANNs) that operate on continuous values, SNNs process information in the form of spikes occurring at precise moments in time.
The temporal characteristics of spikes make SNNs particularly well-suited for tasks involving sequential data or dynamic environments, such as sensory processing \cite{Li2017CIFAR10DVSAE,Fang2021DeepRL}, patch-based image classification \cite{Zhou2022SpikformerWS,yao2023spike}, time-series forecasting \cite{lv2024efficient,lv2024advancing,shibots}, and natural language processing \cite{zhu2023spikegpt,lv2023spiking,xingspikelm}.

In the vanilla Transformer architecture \cite{Vaswani2017AttentionIA}, positional encoding serves as a critical mechanism for modeling sequential dependencies in input data.
Beyond absolute positional encoding, relative positional encoding (RPE) \cite{presstrain2017,su2024roformer} has emerged as an effective approach to represent inter-element distances, enabling models to capture relational patterns within sequences dynamically.
Although RPE has demonstrated effectiveness in improving language modeling \cite{presstrain2017} and visual recognition tasks \cite{liu2021swin}, its integration into SNNs remains underexplored.
Existing methodologies for implementing positional encoding in spiking Transformers either suffer from ambiguous spike representations across positions \cite{Zhou2022SpikformerWS,yao2024spikedriven}, or neglect to integrate relative positional relationships entirely \cite{lv2024advancing}.
Directly adapting current RPE techniques, such as Attention with Linear Biases (ALiBi) \citep{presstrain2017} and Rotary Position Embedding (RoPE) \citep{su2024roformer}, to spiking Transformers encounters significant challenges.
Specifically, spiking neural architectures exhibit intrinsic difficulty in decoupling relative positional information from their sparse, event-driven representations.
This limitation, empirically demonstrated in Section \ref{exp:tsf}, underscores the necessity for rethinking RPE integration to align with neuromorphic computing principles, such as temporal sparsity and spike-based communication.

In this paper, we first propose that the Hamming distance \citep{hamming1986coding}, which quantifies the number of ones resulting from the XOR operation between two binary strings, serves as an appropriate metric for measuring relative distances when both the query and key matrices are binary.
Consequently, we refine the spiking self-attention mechanism \citep{Zhou2022SpikformerWS} by replacing dot-product operations with exclusive-NOR (XNOR) logic operations.
Then we present two novel approximation strategies for integrating RPE into spiking Transformers, while strictly preserving the binary activation dynamics inherent to spiking neurons.
First, we propose \textbf{Gray-PE}, a method exploiting the properties of Gray Code \cite{gray1953pulse} to binarize relative positional distances.
We theoretically prove that encoding relative distances via Gray Code ensures a constant Hamming distance between the binary representations of positional indices whose decimal differences equal $2^n$, where $n\geq0$ (See Theorem \ref{the:gray}).
This property guarantees that any pair of positions separated by a relative distance of $2^n$ in decimal space exhibits invariant Hamming distances in their Gray Code-encoded representations.
Such invariance stabilizes positional relationship modeling for power-of-two intervals, addressing a critical limitation in existing spiking neural architectures.
Second, we propose \textbf{Log-PE}, a method adapting insights from ALiBi \citep{presstrain2017} and Rectified RoPE \citep{Jianlin-Su-9708}.
Log-PE integrates a non-negative logarithmic transformation of the relative distance map directly into the spiking attention map, inducing a decaying sensitivity to positional relationships akin to windowed attention mechanisms.
Moreover, we extend the proposed RPE methods to their two-dimensional form, making them suitable for processing image patches.

To systematically evaluate the efficacy of our proposed RPE methods, we benchmark them across three cross-domain tasks: time series forecasting, text classification, and patch-based image classification. 
We employ three representative spiking Transformer architectures as backbones: Spikformer \cite{Zhou2022SpikformerWS}, the Spike-driven Transformer \citep{yao2023spike}, and QKFormer \cite{zhou2024qkformer}. 
Experimental results demonstrate consistent performance gains across all tasks when integrating our RPE approaches, affirming that explicit modeling of relative positional relationships addresses a critical limitation in existing spiking Transformer designs.
Furthermore, we conduct experiments on ablation study, long sequence modeling, and sensitivity analysis to validate the inner properties of our proposed RPE method.

This work establishes a framework for integrating relative positional encoding (RPE) into spiking Transformers, advancing their applicability in neuro-inspired machine learning paradigms. Our primary contributions are summarized as follows:
\begin{itemize}
\vspace{-2mm}
\setlength{\itemsep}{0pt}
\setlength{\parsep}{0pt}
\setlength{\parskip}{0pt}
\item \textbf{Two RPE Methods for Spiking Transformers.}
To our knowledge, this study is among the first to explore RPE adaptations for spiking architectures systematically. While Gray-PE and Log-PE operate as principled approximations constrained by binary spike dynamics, they address a critical gap in positional modeling for neuromorphic computation.
\item \textbf{Theoretical Foundations and Empirical Analysis.}
In addition to empirical validation, we provide theoretical guarantees demonstrating that our methods can partially encode relative positional information.
Furthermore, we offer necessary analysis on the internal properties of RPE and their robustness facing long sequences.
\item \textbf{Consistent Performance Gains Across Architectures and Tasks.}
Our proposed RPE methods consistently improve the performance of spiking Transformers across various sequential tasks, including time-series forecasting and text classification.
\end{itemize}

\section{Related Work}
\label{sec:relatedwork}
Positional encoding serves as an indispensable mechanism for preserving the order of input elements in sequential modeling tasks.
Traditional absolute positional encoding assigns static, predefined embeddings to individual tokens based on their sequential indices.
In contrast, relative positional encoding (RPE) dynamically models the pairwise distances between tokens, enabling the self-attention mechanism to prioritize interactions based on their relative proximity.
RPE allows the model to generalize across different sequence lengths and better capture relationships between tokens.

Despite the importance of PE in sequence-aware architectures, its application to SNNs is limited.
Existing implementations, such as Spikformer \citep{Zhou2022SpikformerWS} and Spike-driven Transformer \citep{yao2023spike,yao2024spikedriven,yao2024scaling}, incorporate a combination of convolutional layers, batch normalization, and spiking neuron layers to derive learnable positional encodings.
However, we argue that this approach functions more similarly to a spike-element-wise residual connection \citep{Fang2021DeepRL} than to a conventional positional encoding module.
A principled PE module should offer unique representations for positions, but the spike-position matrices generated by these methods may lead to identical spike representations for different positions.

CPG-PE, proposed by \cite{lv2024advancing}, introduces a spiking absolute positional encoding inspired by central pattern generators \cite{marder2001central}, generating unique periodic binary spike patterns for each position.
However, their approach is based on absolute positional encoding and, thus, does not capture the time-translational invariance property in many sequential modeling problems, which, however, is an important advantage of relative positional encoding methods.

\section{Preliminary}
\label{sec:prelim}

\begin{figure*}[]
\centering
\includegraphics[width=0.98\linewidth]{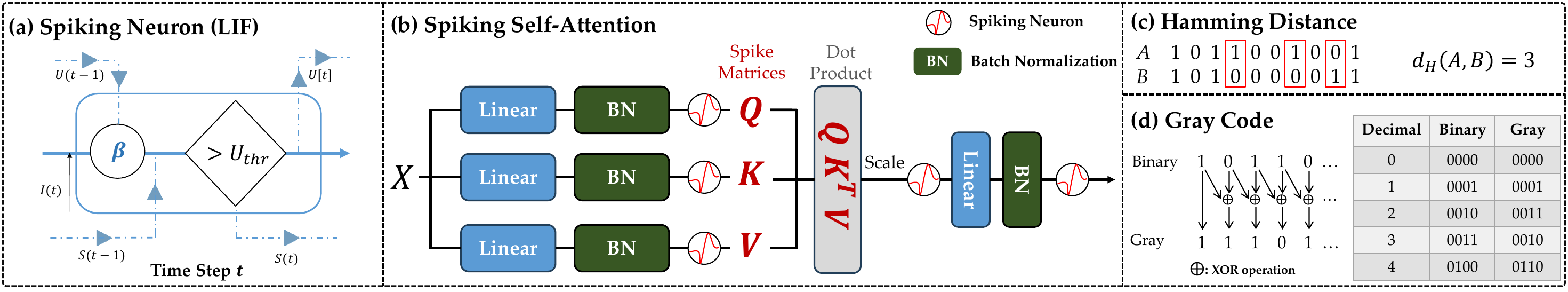}
\caption{
Illustration of preliminary knowledge.
(a) Spike dynamics of LIF neurons.
(b) Illustration of vanilla spiking self-attention in Spikformer \cite{Zhou2022SpikformerWS}.
(c) An example of Hamming Distance between two spike trains.
(d) The calculation process of the classic Reflected Gray Code.
}
\label{fig:preli}
\end{figure*}

\subsection{Spiking Neurons}\label{sec:snn}
We take the leaky integrate-and-fire (LIF) neuron \citep{Maas1997NetworksOS} as our building brick of SNNs, which is governed by the input current $I[t]$, influencing the membrane potential $U[t]$ and the spike output $S[t]$ at each time step $t$.
The dynamic of the LIF neuron is captured by the following system of equations:
\begin{align}
\label{equ:membranePotential}
U[t] &=H[t](1 - S[t])+U_{\text{reset}}S[t], \quad S[t] =\Theta(H[t]-U_{\text{thr}}),\\
\label{equ:ht}
H[t] &=U[t-1]+\frac{1}{\tau}(I[t]-(U[t-1]-U_{\text{reset}})),
\end{align}
where $\tau$ is the membrane time constant.
The spike $S(t)$ will be triggered when the membrane potential $H(t)$ exceeds a threshold $U_\text{thr}$, right after which $U[t]$ will be reset to $U_\text{reset}$.

\subsection{Spiking Self-Attention}\label{sec:ssa}

Spiking self-attention (SSA) is a spiking version of self-attention \cite{Vaswani2017AttentionIA}, which was proposed in Spikformer \cite{Zhou2022SpikformerWS}.
The vital design is to utilize discrete spikes to approximate the vanilla self-attention mechanism.
It can be written as:
\begin{equation}
\begin{aligned}
\mathbf{Q}, \mathbf{K}, \mathbf{V} & = \mathcal{SN}\left(\operatorname{BN}\left(\mathbf{X} \cdot \mathbf{W}_{{Q,K,V}}\right)\right)\in \{0,1\}^{T \times L \times D}
\end{aligned}
\end{equation}
where $\mathcal{SN}$ is a spike neuron layer described in Equation \ref{equ:membranePotential}.
The input is denoted as $\mathbf{X} \in \{0,1\}^{T \times L \times D}$, where $T$ is the number of time steps.
$\operatorname{BN}$ represents batch normalization, and $\sigma$ is a scaling factor.
The attention map $\mathbf{AttnMap}$ is then computed as the dot product between $\mathbf{Q}$ and $\mathbf{K}^{\operatorname{T}}$:
\begin{equation} \label{equ:ssa}
\operatorname{SSA}\left(\mathbf{Q}, \mathbf{K}, \mathbf{V}\right) =\mathcal{SN}(\operatorname{BN} ((\underbrace{\mathbf{Q}\cdot \mathbf{K}^{\operatorname{T}}}_{\mathbf{AttnMap}} \cdot \mathbf{V} * \sigma )\cdot \mathbf{W})).
\end{equation}
As a result, the attention map $\mathbf{AttnMap} \in \mathbb{N}_{0}^{T\times L\times L}$, where $\mathbb{N}_{0}$ denotes the set of non-negative integers.
The outputs of the SSA, as well as $\mathbf{Q}$, $\mathbf{K}$, and $\mathbf{V}$, are all spike matrices containing only values of $0$ and $1$. The parameters $\mathbf{W}_Q$, $\mathbf{W}_K$, $\mathbf{W}_V$, and $\mathbf{W}$ are all learnable parameters.

Recent studies, including Spike-Driven Transformer (SDT) \citep{yao2023spike,yao2024spikedriven,yao2024scaling}, SpikingResFormer \citep{shi2024spikingresformer}, and QKFormer \citep{zhou2024qkformer}, have proposed various modifications to the standard SSA mechanism.
For our empirical evaluation, we selectively employ architectures demonstrating compatibility with our proposed relative position encoding methods.

\subsection{Relative Positional Encoding}\label{sec:rpe}
Relative positional encoding (RPE) in Transformers primarily introduces bias terms into the self-attention mechanism that dynamically encode pairwise token distances.
A common implementation of RPE, as demonstrated in prior work \cite{liu2021swin,hao2024posmlp}, is formalized as follows:
\begin{equation}\label{equ:rpe}
\text{Attention}(\mathbf{Q}, \mathbf{K}, \mathbf{V}) = \underbrace{\text{Softmax}\left( \frac{\mathbf{Q} \cdot \mathbf{K}^{\operatorname{T}}}{\sqrt{d_k}} + \mathbf{R}_{i,j}\right)}_{\mathbf{AttnMap}} \cdot \mathbf{V}.
\end{equation}
Here, $\mathbf{R}_{i,j}$ represents the relative positional bias between the $i$-th query and the $j$-th key positions.

Beyond additive bias terms, another widely adopted form of RPE leverages relative positional embeddings directly in the attention computation, where query–position and key–position interactions are parameterized separately.  
For example, RoPE~\cite{su2024roformer} can be expressed as
\begin{equation}\label{equ:rope}
\text{Attention}(\mathbf{Q}, \mathbf{K}, \mathbf{V}) =
\underbrace{\text{Softmax}\left( \frac{( \mathbf{Q}\mathbf{R}_i ) \cdot ( \mathbf{K}\mathbf{R}_j )^{\operatorname{T}}}{\sqrt{d_k}} \right)}_{\mathbf{AttnMap}} \cdot \mathbf{V},
\end{equation}
where $\mathbf{R}_i$ and $\mathbf{R}_j$ are position-dependent rotation operators applied to the $i$-th query and $j$-th key vectors, respectively.

A critical aspect of RPE is its adherence to \textbf{distance consistency}: the magnitude of $\mathbf{R}_{i,j}$ is determined exclusively by the relative positional offset $|i-j|$, ensuring that the model systematically differentiates between proximal and distant tokens.
This property enhances the model’s capacity to capture long-range dependencies and generalize across variations in sequence length and structure.


\subsection{Hamming Distance}\label{sec:hamm}

The Hamming distance \citep{hamming1986coding} between two binary strings of equal length is the number of bit positions at which the corresponding bits are different.
Formally, for two binary strings \(A\) and \(B\) of length \(m\),
\begin{equation}
d_H(A, B) = \sum_{i=1}^{m} \delta(A_i, B_i),
\quad \text{where} \quad
\delta(A_i, B_i) = \begin{cases}
1 & \text{if } A_i \neq B_i, \\
0 & \text{otherwise}.
\end{cases}
\end{equation}
Hamming distance is suitable for measuring the relative distances when $\mathbf{Q}$ and $\mathbf{K}$ are spike matrices.

\subsection{Gray Code}\label{sec:gray}

Gray Codes \cite{gray1953pulse}, also known as reflected binary codes, are \textbf{binary} numbering systems where adjacent values differ by precisely one bit.
For a non-negative integer \(x\), the standard binary reflected Gray Code \(G(x)\) is defined by the following bitwise operation:
\begin{equation}
G(x) = x \oplus \left( x \gg 1 \right),
\end{equation}
where \(\oplus\) denotes the bitwise XOR operation, and \(\gg\) denotes the arithmetic right shift.

Since the preliminary knowledge involved is extensive and loosely connected, we have provided Figure \ref{fig:preli} to help readers visually grasp the key concepts of each section.

\section{Method}

\begin{figure*}[]
\centering
\includegraphics[width=0.99\linewidth]{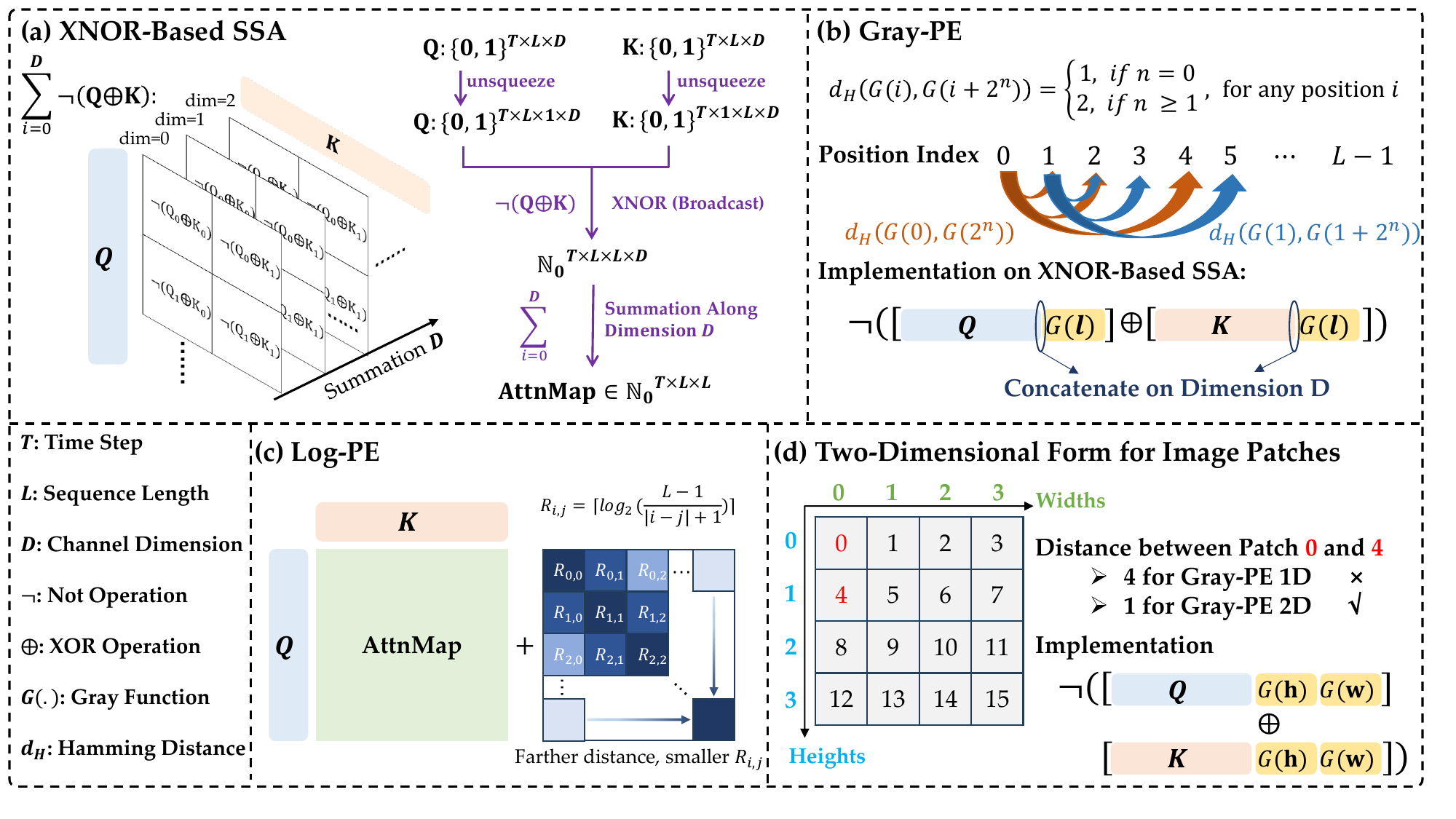}
\label{fig:main}
\caption{
Overview of Our Method.
(a) XNOR-based spiking self-attention.
We illustrate the computation flow for $\mathbf{Q}$ and $\mathbf{K}$ in a PyTorch-style notation.
(b) Gray-PE.
Position indices differing by $2^n$ exhibit a consistent Hamming distance on their Gray code representations.
Gray-PE is implemented by concatenating $G(\boldsymbol{l})$ along the $D$ dimension on both $\mathbf{Q}$ and $\mathbf{K}$.
(c) Log-PE.
A pre-assigned relative distance encoding map $\mathbf{R}_{i,j} \in \mathbb{N}_0$ is added to the original attention map $\mathbf{AttnMap}$.
(d) 2D Form of Gray-PE.
A 2D RPE is more suitable than the 1D version for image patches, as it captures the spatial relationships more effectively.
}
\label{fig:main}
\end{figure*}

\subsection{Design Principles}\label{sec:principles}

Relative position encoding (RPE) aims to encode the relative distances between positional indices within a sequence.
In many spiking Transformers, such as Spikformer, Spike-Driven Transformer, and QKFormer, both the $\mathbf{Q}$ and $\mathbf{K}$ matrices are binary.
Consequently, their relative distances can be computed using the \emph{Hamming distance}, which corresponds to the number of ones resulting from the XOR operation between $\mathbf{Q}$ and $\mathbf{K}$.
To better align with this Hamming distance-based similarity measure, we replace the traditional dot-product spiking self-attention (SSA) mechanism with an XNOR-based SSA.
Inspired by RPE strategies in Transformers, we propose two approaches for incorporating relative distance information into spiking attention mechanisms: (1) \textbf{Gray-PE}: Gray-Code-based positional encoding concatenated to $\mathbf{Q}$ and $\mathbf{K}$, and (2) \textbf{Log-PE}: logarithmic positional encoding applied directly to the attention map.

\subsection{XNOR-Based Spiking Self-Attention}\label{sec:not_xor}

In the original Transformer \cite{Vaswani2017AttentionIA}, the attention map is computed via the dot product between the query and key matrices, $\mathbf{AttnMap} = \mathbf{Q} \cdot \mathbf{K}^{\operatorname{T}}$, which effectively captures \textbf{similarity} of $\mathbf{Q}$ and $\mathbf{K}$.
As mentioned above, in order to capture the relative distances of spiking matrices while effectively measuring the similarity, we design the XNOR-based SSA.
Unlike the dot-product operation, XNOR accounts for both spiking state ($1$) and the resting state ($0$).

Formally, we modify Equation \ref{equ:ssa} as follows:
\begin{equation}\label{equ:not_xor}
\mathbf{AttnMap} = \sum_{i=0}^D\lnot (\mathbf{Q}\oplus \mathbf{K}),
\end{equation}
where $\lnot$ denotes the Not operation, $\oplus$ denotes the XOR operation, and $D$ represents the channel dimension.
Note that every token in $\mathbf{Q}$ will perform XOR with every token in $\mathbf{K}$, so we sum over the channel dimension $D$ to get $\mathbf{AttnMap}\in\mathbb{N}_0^{T\times L \times L}$, shown in Figure \ref{fig:main} (a).
The scale factor $\sigma$ in Equation \ref{equ:ssa} should be set to a smaller value or treated as a learnable parameter, ensuring that the firing rate of $\mathcal{SN}$ does not become excessively large.
We will empirically demonstrate that this XNOR modification does not negatively impact the performance of the vanilla spiking self-attention.

\subsection{Gray-PE}\label{sec:graype}

We propose that the Gray Code can serve as an approximate approach to relative positional encoding for spiking Transformers.
This is supported by the following \Cref{the:gray}:

\begin{DefinitionBox}
\begin{theorem}\label{the:gray}
(Proof in Appendix~\ref{app:proof}) For two position indices differing by $2^n (n\geq0)$, their Gray Code representations have a consistent Hamming distance. Specifically, $\forall$ position \(i\), we have:
\begin{equation}
d_H(G(i), G(i + 2^n)) =
\begin{cases}
1 & \text{if } n = 0, \\
2 & \text{if } n \geq 1.
\end{cases}
\end{equation}
\end{theorem}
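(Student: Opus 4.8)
The plan is to reduce the Hamming-distance computation to a transition-counting problem on the bit pattern of $i \oplus (i+2^n)$, and then to analyze that pattern through the carry structure of binary addition. I would begin by recording a bit-level description of the Gray code. Writing $x_k$ for the $k$-th bit of $x$, the defining relation $G(x) = x \oplus (x \gg 1)$ yields $(G(x))_k = x_k \oplus x_{k+1}$, since the arithmetic right shift simply relabels bit $k+1$ as bit $k$ and XOR acts bitwise.

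With this in hand, I would locate the positions where $G(i)$ and $G(j)$ disagree, for $j = i + 2^n$. Setting $d = i \oplus j$ with bits $d_k = i_k \oplus j_k$, the two Gray codes differ at position $k$ exactly when $i_k \oplus i_{k+1} \neq j_k \oplus j_{k+1}$, i.e.\ when $d_k \neq d_{k+1}$. Hence $d_H(G(i), G(j))$ equals the number of indices $k \geq 0$ at which consecutive bits of $d$ differ --- the number of \emph{transitions} in the binary string $d$. This converts the whole statement into a purely combinatorial question about the shape of $d$.

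The heart of the argument is then to determine that shape. Adding $2^n$ leaves bits $0,\dots,n-1$ untouched, so $d_k = 0$ for every $k < n$. At position $n$ the addition triggers an increment whose carry propagates through the maximal run of $1$'s starting at $n$: letting $m \geq n$ be the least index with $i_m = 0$, bits $n,\dots,m-1$ flip from $1$ to $0$ and bit $m$ flips from $0$ to $1$, while all higher bits are unchanged. Consequently $d$ is exactly a contiguous block of $1$'s occupying positions $n$ through $m$, and $0$ everywhere else. Counting transitions of such a block finishes the proof: the upper boundary between positions $m$ and $m+1$ always contributes one transition ($1 \to 0$), whereas the lower boundary between positions $n-1$ and $n$ contributes a transition ($0 \to 1$) only when position $n-1$ exists, i.e.\ when $n \geq 1$; for $n = 0$ the block is anchored at the least significant bit and has no lower boundary. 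Since the interior of the block is constant, these are the only transitions, giving the count $1$ when $n=0$ and $2$ when $n \geq 1$.

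I expect the main obstacle to be the carry-propagation step: one must argue cleanly that, no matter how far the carry ripples upward, the XOR $d$ is always a \emph{single} contiguous run of $1$'s beginning precisely at bit $n$, with all lower bits forced to $0$. Once this is pinned down, the dichotomy in the theorem is explained entirely by a boundary effect at bit $0$ --- which is exactly why the case $n = 0$ is the exceptional one.
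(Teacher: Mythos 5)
Your proof is correct, and it takes a genuinely different route from the paper's. The paper regroups $G(a)\oplus G(a+2^n)$ into $\Delta_1=a\oplus(a+2^n)$ and $\Delta_2=(a\gg 1)\oplus((a+2^n)\gg 1)$ and then computes both quantities explicitly in three separate cases: $n=0$ (dismissed by citing the well-known adjacency property of Gray codes), $n\geq 1$ with bit $n$ of $a$ equal to $0$, and $n\geq 1$ with bit $n$ of $a$ equal to $1$. Your argument instead passes through the bitwise identity $(G(x))_k = x_k \oplus x_{k+1}$ — equivalently, the $\mathrm{GF}(2)$-linearity $G(i)\oplus G(j) = G(i\oplus j)$ — to reduce everything to counting transitions in $d = i\oplus(i+2^n)$, and then shows once and for all that $d$ is a single contiguous run of ones from bit $n$ up to the first zero bit $m$ of $i$ at or above $n$. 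This buys you two things the paper does not have: the no-carry and carry cases collapse into one statement (no carry is simply $m=n$), and the $n=0$ case is actually \emph{proved} rather than cited, emerging as a boundary effect at bit $0$ instead of an exceptional case. Your method also makes the two differing bit positions ($m$ and, when $n\geq 1$, $n-1$) fall out mechanically; the paper identifies the same positions ($c$ and $n-1$ in its notation) but via explicit XOR computations, and in fact its displayed formula $\Delta_1\oplus\Delta_2 = 2^c + 2^n$ contains a slip (it should read $2^c + 2^{n-1}$, as the paper's own surrounding text confirms) — a kind of error your transition-counting framing naturally avoids. The paper's approach, in exchange, is more self-contained at the level of elementary bit manipulation and displays the exact value of $G(a)\oplus G(b)$ in each case, which some readers may find more concrete.
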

\end{DefinitionBox}

As illustrated in Figure \ref{fig:main} (b), the Hamming distance $d_{H}(0, 1)$ and $d_{H}(1,2)$ both equal $1$ because their relative distance is $1$, i.e., $2^n,n=0$.
Similarly, $d_{H}(0, 2)=d_{H}(1,3)$, and $d_{H}(0, 4)=d_{H}(1,5)$, as their relative distances are the power of $2$.
That said, Gray Code ensures the consistency of relative distance representations for every $2^n$ ($n\geq0$) relative distance.

For implementation, we concatenate the Gray Code representations of each position index to both the query matrix $\mathbf{Q} \in \mathbb{N}_0^{T\times L \times D }$ and key matrix $\mathbf{K} \in \mathbb{N}_0^{T\times L \times D}$, leaving the remaining operations unchanged.
We use concatenation instead of addition because $\mathbf{Q}$ and $\mathbf{K}$ are spike matrices, and addition would compromise their binary nature.
Formally, the attention map $\mathbf{AttnMap}$ will be:
\begin{equation}
\mathbf{AttnMap} = \sum_{i=0}^D\lnot ([\mathbf{Q} \parallel G(\boldsymbol{l})]\oplus[\mathbf{K}\parallel G(\boldsymbol{l})]),
\end{equation}
where $G(\cdot)$ represents the function that converts integers into their binary Gray Code representations.
The vector $\boldsymbol{l}$ denotes an array of position indexes, specifically $[0,1,2,\dots,L-1]$, where $L$ is the sequence length of $\mathbf{Q}$ and $\mathbf{K}$.
$\parallel$ denotes concatenation on the channel dimension $D$.

Notably, the binary nature of Gray Code (comprising only $0$ and $1$) aligns intrinsically with the spike-based computation paradigm, avoiding the need for floating-point operations that impose significant implementation overhead on neuromorphic hardware.

\subsection{Log-PE}\label{sec:extended}
Although Gray-PE can partially capture relative distances, it faces significant challenges when the input sequence is long or when the downstream task is highly sensitive to long-range dependencies.
For instance, when $L\geq10^2$, the distinguishable range of relative distances under Gray-PE becomes constrained by its power-of-two quantization mechanism.
To mitigate this, we propose Log-PE that integrates logarithmic positional bias into spiking-based self-attention.
Specifically, we simulate Equation \ref{equ:rpe} and follow ALiBi \cite{presstrain2017} to directly add a pre-assigned relative position map, denoted as $\mathbf{R}_{ij}$, to the attention map produced by SSA:
\begin{equation}\label{equ:log_pe}
\mathbf{AttnMap} = \left(\sum_{i=0}^D\lnot (\mathbf{Q}\oplus \mathbf{K})\right) + \mathbf{R}_{i,j},
\;\text{where} \;\; \mathbf{R}_{i,j} = 
\begin{bmatrix}
R_{i,j}
\end{bmatrix} =
\begin{bmatrix}
\lceil
\log_{2}(\frac{L-1}{|i-j|+1})
\rceil
\end{bmatrix}.
\end{equation}
Here, $\lceil.\rceil$ denotes the round-up function, $L$ is the sequence length, and $i,j$ is position indices.

Figure \ref{fig:main} (c) shows an illustration of Log-PE.
Since the original $\mathbf{AttnMap}$ is a matrix composed of non-negative integers, we aim to ensure accurate relative distance consistency while preserving the effectiveness of spiking self-attention.
Theoretically, if we set the $R_{i,j}$ as $\frac{L-1}{|i-j|+1}$, we could obtain a complete RPE for the spiking Transformers.
However, we choose not to pursue this solution, because for long sequence lengths $L$, the large values of $\frac{L-1}{|i-j|+1}$ would catastrophically overshadow the original spiking attention activations (See \Cref{app:ablation}).
Therefore, using the logarithmic form $R_{i,j}$ represents a compromise that balances the values between the spiking attention map and complete-RPE, while partially capturing relative position information.

\subsection{Two-Dimensional Form for Image Patches}\label{sec:twod}
CNN-based SNN models, such as Spiking VGG \cite{sengupta2019going} and SEW-ResNet \citep{Fang2021DeepRL}, do not incorporate the concept of ``positional encoding'' in their spike representations.
Vision Transformer \cite{dosovitskiy2021an} reformulated traditional image classification into a patch-based approach, dividing images into smaller patches.
Unlike 1D positional encoding, which only considers the linear sequence of patches, 2D RPE accounts for \textbf{both the horizontal and vertical} positions of the patches in the image grid.
This ensures that the model can recognize the relative positions along a single axis and the crucial interactions between patches across both dimensions.
We show our 2D form in Figure \ref{fig:main} (d).
In our implementation, we assign horizontal and vertical positions with independent dimensions to store the Gray Code.
Formally, the attention map $\mathbf{AttnMap}$ is:
\begin{equation}
\mathbf{AttnMap} = \sum_{i=0}^D\lnot \left([\mathbf{Q} \parallel G(\mathbf{h}) \parallel G(\mathbf{w})] \oplus [\mathbf{K}\parallel G(\mathbf{h}) \parallel G(\mathbf{w})]\right).
\end{equation}
Here, $\mathbf{h}$ is the array of position indices, specifically $\mathbf{h}=[0,1,2,\dots,h-1]$, where 
$h$ denotes the maximum patch index along the height axis.
Similarly, $\mathbf{w}$ is along the width axis.
As for the 2D form of Log-PE, we can add $\mathbf{R^h}_{i,j}$ and $\mathbf{R^w}_{i,j}$ on $\mathbf{AttnMap}$, replacing the sequence length $L$ in Equation \ref{equ:log_pe} with $h$ or $w$.
However, in our pre-experiments, we found that spiking Transformers with Log-2D failed to converge due to the excessive magnitude.
Therefore, we abandon the 2D form of Log-PE.

\section{Experiments}
\label{sec:exp}
\subsection{Datasets} \label{sec:datasets}

To evaluate the RPE capabilities of the compared models, we conduct experiments on two sequential tasks: \textbf{time-series forecasting} and \textbf{text classification}.
Following \cite{lv2024efficient}, we choose $4$ real-world datasets for time-series forecasting: Metr-la \citep{li2017diffusion}, Pems-bay \citep{li2017diffusion}, Electricity \citep{lai2018modeling}, Solar \citep{lai2018modeling}.
For text classification, we follow \cite{lv2024advancing} and conduct experiments on six benchmark datasets: Movie Reviews \citep{Pang2005SeeingSE}, SST-2 \citep{Socher2013RecursiveDM}, SST-5, Subj, ChnSenti, and Waimai.
Additionally, to demonstrate the versatility of our RPE method in image processing, we perform \textbf{patch-based image classification} experiments on two static datasets, CIFAR and Tiny-ImageNet, and one neuromorphic dataset, CIFAR10-DVS \citep{Li2017CIFAR10DVSAE}.
The details of these datasets, metrics, and training hyperparameters are provided in Appendix \ref{app:exp}.

\begin{table*}[]
\centering
\caption{
\label{tab:tsf_table}
Experimental results of time-series forecasting on $4$ benchmarks with various prediction lengths $6,24,48,96$.
``PE'' stands for positional encoding.
``R'' denotes relative PE, while ``A'' denotes absolute PE.
``w/'' denotes ``with''.
The best results for each series of spiking Transformers are highlighted in bold font.
$\uparrow$ ($\downarrow$) indicates that the higher (lower) the better.
Results highlighted with shading are ours.
All results are averaged across $3$ random seeds.
}
\resizebox{\linewidth}{!}{
\begin{tabular}{l:c:c:r:cccc:cccc:cccc:cccc:c}
\toprule
\cline{1-21}
\multirow{2}{*}{Models} & \multicolumn{2}{c:}{\bf PE} & \multirow{2}{*}{Metric} & \multicolumn{4}{c:}{\bf Metr-la ({\small$L=12$})} & \multicolumn{4}{c:}{\bf Pems-bay ({\small$L=12$})} & \multicolumn{4}{c:}{\bf Solar ({\small$L=168$})} & \multicolumn{4}{c:}{\bf Electricity ({\small $L=168$)}} & \multirow{2}{*}{\textbf{Avg.}}\\
\cline{2-3}
\cline{5-20}
& Spike & Type & & $6$ & $24$ & $48$ & $96$ & $6$ & $24$ & $48$ & $96$& $6$ & $24$ & $48$ & $96$& $6$ & $24$ & $48$ & $96$ \\

\hline \hline

\multirow{2}{*}{Transformer w/ RoPE} & \multirow{2}{*}{\xmark} & \multirow{2}{*}{R} & R$^2$$\uparrow$ & $\bf.729$ & $\bf.560$ & $\bf.416$ & $\bf.306$ & $\bf.787$ & $.730$ & $\bf.694$ & $.676$ & $.951$ & $.854$ & $\bf.763$ & $\bf.720$ & $\bf.984$ & $.978$ & $.974$ & $\bf.968$ & $\bf.756$\\
& & & RSE$\downarrow$ & $\bf.548$ & $\bf.696$ & $\bf.802$ & $\bf.878$ & $\bf.499$ & $.563$ & $\bf.600$ & $.617$ & $.225$ & $\bf.373$ & $\bf.492$ & $\bf.539$ & $.251$ & $.274$ & $.341$ & $\bf.420$ & $\bf.507$ \\
\hline

\multirow{2}{*}{Transformer w/ ALiBi} & \multirow{2}{*}{\xmark} & \multirow{2}{*}{R} & R$^2$$\uparrow$ & $.725$ & $.558$ & $.409$ & $.293$ & $.782$ & $.727$ & $.690$ & $\bf.677$ & $.924$ & $.845$ & $.741$ & $.665$ & $\bf.984$ & $\bf.980$ & $\bf.976$ & $\bf.968$ & $.747$ \\
& & & RSE$\downarrow$ & $.556$ & $.700$ & $.814$ & $.885$ & $.507$ & $.569$ & $.606$ & $\bf.615$ & $.281$ & $.393$ & $.527$ & $.602$ & $\bf.250$ & $\bf.271$ & $\bf.339$ & $.422$ & $.521$ \\
\hline

\multirow{2}{*}{Transformer w/ Sin-PE} & \multirow{2}{*}{\xmark} & \multirow{2}{*}{A} & R$^2$$\uparrow$ & $.727$ & $.554$ & $.413$ & $.284$ & $.785$ & $\bf.734$ & $.688$ & $.673$ & $\bf.953$ & $\bf.858$ & $.759$ & $.718$ & $.978$ & $.975$ & $.972$ & $.964$ & $.752$\\
& & & RSE$\downarrow$ & $.551$ & $.704$ & $.808$ & $.895$ & $.502$ & $\bf.558$ & $.610$ & $.618$ & $\bf.223$ & $.377$ & $.504$ & $.545$ & $.260$ & $.277$ & $.347$ & $.425$ & $.512$ \\
\hline \hline



\multirow{2}{*}{Spikformer w/ Conv-PE (Original)} & \color{red}\multirow{2}{*}{\cmark} & \multirow{2}{*}{A} & R$^2$$\uparrow$ & $.713$ & $.527$ & $.399$ & $.267$ & $.773$ & $.697$ & $.686$ & $.667$ & $.929$ & $.828$ & $.744$ & $.674$ & $.959$ & $.955$ & $.955$ & $.954$ & $.733$\\
& & & RSE$\downarrow$ & $.565$ & $.725$ & $.818$ & $.903$ & $.514$ & $.594$ & $.606$ & $.621$ & $.272$ & $.426$ & $.519$ & $.586$ & $.373$ & $.371$ & $.379$ & $.382$ & $.541$\\
\hline

\multirow{2}{*}{Spikformer w/ ALIBi} & & & R$^2$$\uparrow$ & $.665$ & $.483$ & $.380$ & $.104$ & $.760$ & $.644$ & $.348$ & $.064$ & $.080$ & $.080$ & $.080$ & $.080$ & $.710$ & $.710$ & $.710$ & $.710$ & $.413$\\
& \color{black}\multirow{-2}{*}{\xmark} & \multirow{-2}{*}{R} & RSE$\downarrow$ & $.622$ & $.768$ & $.833$ & $1.02$ & $.529$ & $.709$ & $.870$ & $1.04$ & $1.01$ & $1.01$ & $1.01$ & $1.01$ & $1.03$ & $1.03$ & $1.03$ & $1.03$ & $.909$\\
\hline

\multirow{2}{*}{Spikformer w/ RoPE} & & & R$^2$$\uparrow$ & $.699$ & $.493$ & $.390$ & $.243$ & $.768$ & $.699$ & $.680$ & $.664$ & $.911$ & $.820$ & $.714$ & $.644$ & $.954$ & $.951$ & $.949$ & $.940$ & $.720$\\
& \color{black}\multirow{-2}{*}{\xmark} & \multirow{-2}{*}{R} & RSE$\downarrow$ & $.584$ & $.757$ & $.835$ & $.920$ & $.519$ & $.591$ & $.614$ & $.625$ & $.294$ & $.441$ & $.550$ & $.633$ & $.375$ & $.383$ & $.384$ & $.454$ & $.559$\\
\hline

\multirow{2}{*}{Spikformer w/ CPG-PE} & & & R$^2$$\uparrow$ &$.726$ & $.526$ & $.419$ & $.287$ & $.780$ & $.712$ & $.690$ & $.666$ & $.937$ & $.833$ & $.757$ & $.707$ & $.972$ & $.970$ & $.966$ & $.960$ & $.744$\\
& \color{red}\multirow{-2}{*}{\cmark} & \multirow{-2}{*}{A} & RSE$\downarrow$ & $.553$ & $.720$ & $.806$ & $.890$ & $.508$ & $.580$ & $.602$ & $.622$ & $.257$ & $.420$ & $.506$ & $.555$ & $.299$ & $.310$ & $.314$ & $.355$ & $.519$\\
\hline


\multirow{2}{*}{Spikformer-XNOR w/ Conv-PE } & \color{red}\multirow{2}{*}{\cmark} & \multirow{2}{*}{A} & R$^2$$\uparrow$ & $.718$ & $.531$ & $.405$ & $.269$ & $.771$ & $.693$ & $.690$ & $.665$ & $.928$ & $.829$ & $.740$ & $.669$ & $.960$ & $.957$ & $.955$ & $.953$ & $.733$\\
& & & RSE$\downarrow$ & $.559$ & $.721$ & $.813$ & $.910$ & $.518$ & $.599$ & $.613$ & $.628$ & $.273$ & $.421$ & $.527$ & $.595$ & $.365$ & $.371$ & $.376$ & $.384$ & $.542$ \\
\hline


\rowcolor{cpgcolor} &  &  & R$^2$$\uparrow$ & $.728$ & $\bf.544$ & $.414$ & $\bf.295$ & $.782$ & $\bf.724$ & $\bf.694$ & $\bf.673$ & $\bf.936$ & $.840$ & $.756$ & $.710$ & $.974$ & $.972$ & $.966$ & $.962$ & $.748$\\
\rowcolor{cpgcolor}\multirow{-2}{*}{Spikformer-XNOR w/ Gray-PE}& \color{red}\multirow{-2}{*}{\cmark} & \multirow{-2}{*}{R} & RSE$\downarrow$ & $.546$ & $\bf.706$ & $.806$ & $.885$ & $.506$ & $.578$ & $\bf.597$ & $\bf.618$ & $\bf.257$ & $.409$ & $.507$ & $.546$ & $.276$ & $.304$ & $.320$ & $.342$ & $.513$ \\
\hline

\rowcolor{cpgcolor} &  &  & R$^2$$\uparrow$ & $\bf.735$ & $.535$ & $\bf.424$ & $.290$ & $\bf.789$ & $.717$ & $.691$ & $.670$ & $.933$ & $\bf.841$ & $\bf.758$ & $\bf.734$ & $\bf.978$ & $\bf.974$ & $\bf.968$ & $\bf.964$ & $\bf.750$\\
\rowcolor{cpgcolor}\multirow{-2}{*}{Spikformer-XNOR w/ Log-PE}& \color{red}\multirow{-2}{*}{\cmark} & \multirow{-2}{*}{R} & RSE$\downarrow$ & $\bf.543$ & $.719$ & $\bf.799$ & $\bf.876$ & $\bf.496$ & $\bf.575$ & $.601$ & $.620$ & $.265$ & $\bf.408$ & $\bf.504$ & $\bf.525$ & $\bf.272$ & $\bf.300$ & $\bf.314$ & $\bf.340$ & $\bf.509$ \\

\hline \hline

&  &  & R$^2$$\uparrow$ & $.717$ & $.530$ & $.362$ & $.212$ & $.800$ & $.704$ & $.681$ & $.629$ & $.934$ & $.751$ & $.518$ & $.381$ & $.973$ & $.971$ & $.967$ & $.964$ & $.693$ \\
\multirow{-2}{*}{Spikingformer w/o PE (Original)}& \color{black}\multirow{-2}{*}{--} & \multirow{-2}{*}{--} & RSE$\downarrow$ & $.560$ & $.720$ & $.842$ & $.936$ & $.483$ & $.587$ & $.611$ & $.659$ & $.258$ & $.500$ & $.694$ & $.788$ & $.299$ & $.305$ & $.325$ & $.340$ & $.557$
\\ \hline 

\rowcolor{cpgcolor} &  &  & R$^2$$\uparrow$ & $.720$ & $\bf.537$ & $.396$ & $\bf.260$ & $\bf.820$ & $.714$ & $.681$ & $\bf.646$ & $.934$ & $.832$ & $.535$ & $.420$ & $.970$ & $.973$ & $\bf.973$ & $.965$ & $.711$\\
\rowcolor{cpgcolor}\multirow{-2}{*}{Spikingformer-XNOR w/ Gray-PE}& \color{red}\multirow{-2}{*}{\cmark} & \multirow{-2}{*}{R} & RSE$\downarrow$ & $.558$ & $\bf.712$ & $.819$ & $.907$ & $\bf.459$ & $.578$ & $.610$ & $\bf.643$ & $.257$ & $.421$ & $.663$ & $.768$ & $.305$ & $.293$ & $.294$ & $.338$ & $.539$ \\
\hline

\rowcolor{cpgcolor} &  &  & R$^2$$\uparrow$ & $\bf.737$ & $.535$ & $\bf.403$ & $\bf.260$ & $.816$ & $\bf.719$ & $\bf.682$ & $.640$ & $\bf.939$ & $\bf.854$ & $\bf.544$ & $\bf.434$ & $\bf.977$ & $\bf.974$ & $.972$ & $\bf.967$ & $\bf.716$ \\
\rowcolor{cpgcolor}\multirow{-2}{*}{Spikingformer-XNOR w/ Log-PE}& \color{red}\multirow{-2}{*}{\cmark} & \multirow{-2}{*}{R} & RSE$\downarrow$ & $\bf.540$ & $.714$ & $\bf.814$ & $\bf.906$ & $.463$ & $\bf.573$ & $\bf.609$ & $.652$ & $\bf.246$ & $\bf.382$ & $\bf.651$ & $\bf.759$ & $\bf.270$ & $\bf.292$ & $\bf.293$ & $\bf.336$ & $\bf.531$ \\
\hline
\hline

\multirow{2}{*}{SDT-V1 w/ Conv-PE (Original)} & & & R$^2$$\uparrow$ & $.689$ & $.517$ & $.409$ & $.253$ & $.769$ & $.700$ & $.647$ & $.630$ & $.917$ & $.819$ & $.723$ & $.655$ & $.956$ & $.952$ & $.949$ & $.950$ & $.721$\\
& \color{red}\multirow{-2}{*}{\cmark} & \multirow{-2}{*}{A} & RSE$\downarrow$ & $.604$ & $.735$ & $.811$ & $.915$ & $.522$ & $.596$ & $.665$ & $.673$ & $.286$ & $.439$ & $.538$ & $.602$ & $.371$ & $.376$ & $.388$ & $.386$ & $.557$\\
\hline

\multirow{2}{*}{SDT-V1 w/ CPG-PE} & & & R$^2$$\uparrow$ & $.701$ & $.525$ & $\bf.418$ & $.257$ & $.778$ & $\bf.716$ & $.660$ & $\bf.656$ & $.919$ & $\bf.820$ & $.710$ & $.644$ & $.963$ & $.960$ & $.958$ & $.952$ & $.727$\\
& \color{red}\multirow{-2}{*}{\cmark} & \multirow{-2}{*}{A} & RSE$\downarrow$ & $.585$ & $.724$ & $\bf.799$ & $.920$ & $.515$ & $\bf.578$ & $.633$ & $.642$ & $.285$ & $.439$ & $.558$ & $.637$ & $.361$ & $.368$ & $.370$ & $.376$ & $.548$\\
\hline

\rowcolor{cpgcolor} & & & R$^2$$\uparrow$ &$\bf.714$ & $\bf.531$ & $.415$ & $\bf.265$ & $\bf.784$ & $.709$ & $\bf.672$ & $.654$ & $\bf.921$ & $\bf.820$ & $\bf.730$ & $\bf.674$ & $\bf.972$ & $\bf.968$ & $\bf.963$ & $\bf.957$ & $\bf.734$\\
\rowcolor{cpgcolor} \multirow{-2}{*}{SDT-V1 w/ Log-PE} & \color{red}\multirow{-2}{*}{\cmark} & \multirow{-2}{*}{R} & RSE$\downarrow$ & $\bf.554$ & $\bf.713$ & $.807$ & $\bf.904$ & $\bf.502$ & $.585$ & $\bf.629$ & $\bf.641$ & $\bf.280$ & $\bf.437$ & $\bf.527$ & $\bf.598$ & $\bf.353$ & $\bf.356$ & $\bf.360$ & $\bf.366$ & $\bf.538$\\
\hline \hline 


\multirow{2}{*}{QKFormer w/ Conv-PE (Original)} & \color{red}\multirow{2}{*}{\cmark} & \multirow{2}{*}{A} & R$^2$$\uparrow$ & $.717$ & $.513$ & $.376$ & $.246$ & $.767$ & $.706$ & $.681$ & $.654$ & $.920$ & $.748$ & $.512$ & $.416$ & $.970$ & $.967$ & $.963$ & $.958$ & $.695$\\
& & & RSE$\downarrow$ & $.561$ & $.735$ & $.832$ & $.917$ & $.521$ & $.586$ & $.609$ & $.635$ & $.289$ & $.515$ & $.716$ & $.784$ & $.306$ & $.319$ & $.355$ & $.367$ & $.565$ \\
\hline

\multirow{2}{*}{QKFormer w/ CPG-PE} & \color{red}\multirow{2}{*}{\cmark} & \multirow{2}{*}{A} & R$^2$$\uparrow$ & $.740$ & $.554$ & $.419$ & $.276$ & $.783$ & $.714$ & $.702$ & $.660$ & $.922$ & $.754$ & $.702$ & $.604$ & $.977$ & $.969$ & $.968$ & $.963$ & $.732$\\
& & & RSE$\downarrow$ & $.536$ & $.704$ & $.803$ & $.896$ & $.503$ & $.578$ & $.589$ & $.633$ & $.285$ & $.520$ & $.581$ & $.645$ & $.266$ & $.312$ & $.315$ & $.332$ & $.531$ \\
\hline

\rowcolor{cpgcolor} &  &  & R$^2$$\uparrow$ & $\bf.742$ & $\bf.551$ & $\bf.418$ & $\bf.274$ & $.799$ & $\bf.715$ & $.691$ & $\bf.674$ & $.927$ & $.817$ & $.710$ & $.691$ & $.974$ & $.970$ & $.968$ & $.965$ & $.742$\\
\rowcolor{cpgcolor}\multirow{-2}{*}{QKFormer-XNOR w/ Gray-PE}& \color{red}\multirow{-2}{*}{\cmark} & \multirow{-2}{*}{R} & RSE$\downarrow$ & $\bf.534$ & $\bf.711$ & $\bf.804$ & $\bf.898$ & $.484$ & $\bf.577$ & $.601$ & $\bf.616$ & $.276$ & $.438$ & $.556$ & $.570$ & $.277$ & $.310$ & $.314$ & $.331$ & $.519$ \\
\hline

\rowcolor{cpgcolor} &  &  & R$^2$$\uparrow$ & $\bf.742$ & $.541$ & $.416$ & $.265$ & $\bf.801$ & $.710$ & $\bf.707$ & $.661$ & $\bf.928$ & $\bf.818$ & $\bf.748$ & $\bf.698$ & $\bf.978$ & $\bf.974$ & $\bf.972$ & $\bf.966$ & $\bf.746$\\
\rowcolor{cpgcolor}\multirow{-2}{*}{QKFormer-XNOR w/ Log-PE}& \color{red}\multirow{-2}{*}{\cmark} & \multirow{-2}{*}{R} & RSE$\downarrow$ & $.535$ & $.715$ & $.805$ & $.903$ & $\bf.482$ & $.581$ & $\bf.585$ & $.629$ & $\bf.274$ & $\bf.437$ & $\bf.515$ & $\bf.564$ & $\bf.264$ & $\bf.285$ & $\bf.296$ & $\bf.328$ & $\bf.514$ \\
\hline

\bottomrule
\end{tabular}
}
\vspace{-4mm}
\end{table*}

\subsection{Time-Series Forecasting}\label{exp:tsf}

We follow the SeqSNN \cite{lv2024efficient} framework to conduct time-series forecasting experiments.
Specifically, we take Spikformer \cite{Zhou2022SpikformerWS}, Spikingformer \citep{zhou2023spikingformer}, Spike-driven Transformer (SDT) V1 \citep{yao2023spike}, and the current visual state-of-the-art (SOTA) model, QKFormer \cite{zhou2024qkformer}, as the backbone architectures.
We modify the SSA mechanism as outlined in Section \ref{sec:not_xor} to create two variants: Spikformer-XNOR and QKFormer-XNOR.
SDT adopts a variant of SSA, which makes it only able to integrate Log-PE but not Gray-PE.
We present the performance of the compared SNN models with various positional encoding methods in Table \ref{tab:tsf_table}.
The key findings are as follows:

\textbf{(1) Directly applying RPE methods to spiking Transformers is ineffective.}
Specifically, Spikformers that are directly equipped with RoPE or ALiBi exhibit poor performance across all benchmarks.
As discussed in Section \ref{sec:intro}, we argue that this limitation stems from the binary nature of spiking neurons during the computation of $\mathbf{Q}$ and $\mathbf{K}$, which makes it difficult to disentangle positional information from sparse spiking activations.

\textbf{(2) The XNOR modification does not impact the performance of the original SNN models.}
The average performance of Spikformer with Conv-PE is nearly identical to that of Spikformer-XNOR with Conv-PE.
This suggests that our XNOR modification of the SSA does not affect the performance of the original SNN models.

\textbf{(3) Gray-PE and Log-PE, enable spiking Transformers to achieve the best performance among their variants.}
CPG-PE is a spiking version of absolute PE designed for SNNs.
Spikformer and QKFormer, when equipped with our proposed Gray-PE and Log-PE, consistently outperform all other corresponding variants.

\textbf{(4) For long input sequences, Log-PE is more effective than Gray-PE in capturing relative positional information.}
The input sequence length for Metr-la and Pems-bay is $12$, whereas for Solar and Electricity, it is $168$.
On the long-sequence datasets Solar and Electricity, spiking Transformers equipped with Log-PE consistently outperform those with Gray-PE across nearly all prediction length settings.
This result indicates that Log-PE is more effective for processing long input sequences.

\subsection{Text Classification}
We conduct experiments to assess the efficacy of spiking Transformers with Gray-PE and Log-PE in text classification tasks.
By comparing them against alternative PE techniques, we demonstrate their superior ability to model complex linguistic structures and contextual dependencies.
Our experimental setup strictly adheres to the methodology outlined in \cite{lv2024advancing}, and the results are shown in Table \ref{tab:text_table}.

\begin{table}[htp]
\vspace{-2mm}
\centering
\begin{center}
\caption{\label{tab:text_table}
Accuracy (\%) on $6$ text classification benchmarks.
Note that QKFormers fail to converge in the text classification task. 
Experimental results are averaged across $5$ random seeds.
}
\vspace{-2mm}
\resizebox{\linewidth}{!}{
\begin{tabular}{l:c:c:c:cccc:cc:c} \hline \hline
\multirow{2}{*}{\textbf{Model}} & \multicolumn{2}{c:}{\bf PE} & \multirow{2}{*}{\textbf{Param(M)}} & \multicolumn{4}{c:}{\bf English Dataset (Length = $128$)}  & 
\multicolumn{2}{c:}{\bf Chinese Dataset (Length = $32$)} & \multirow{2}{*}{\textbf{Avg.}}\\
\cline{2-3}
\cline{5-10}
& \textbf{Spike}& \textbf{Type} & & \textbf{MR} & \textbf{SST-2} & \textbf{Subj} & \textbf{SST-5} & \textbf{ChnSenti} & \textbf{Waimai}\\
\hline

Fine-tuned BERT & \xmark & A & $109.8$ & $\bm{87.63}${\scriptsize $\pm 0.18$} & $\bm{92.31}${\scriptsize $\pm 0.17$} & $\bm{95.90}${\scriptsize $\pm 0.16$} & $\bm{50.41}${\scriptsize $\pm 0.13$} & $\bm{89.48}${\scriptsize $\pm 0.16$} & $\bm{90.27}${\scriptsize $\pm 0.13$} & $\bm{84.33}$\\
\hline


Spikformer w/o PE & -- & -- & $109.8$ & $75.87${\scriptsize $\pm 0.35$} & $81.71${\scriptsize $\pm 0.31$} & $91.60${\scriptsize $\pm 0.30$} & $41.84${\scriptsize $\pm 0.39$} & $85.62${\scriptsize $\pm 0.25$} & $86.87${\scriptsize $\pm 0.28$} & $77.25$\\



Spikformer w/ CPG-PE & {\color{red}\cmark} & A & $110.4$ & $82.42${\scriptsize $\pm 0.42$} & $82.90${\scriptsize $\pm 0.33$} & $92.50${\scriptsize $\pm 0.25$} & $43.62${\scriptsize $\pm 0.36$} & $86.54${\scriptsize $\pm 0.26$} & $\bf88.49${\scriptsize $\pm 0.29$} & $79.41$ \\

Spikformer-XNOR w/o PE & -- & -- & $109.8$ & $75.80${\scriptsize $\pm 0.40$} & $81.74${\scriptsize $\pm 0.40$} & $91.50${\scriptsize $\pm 0.29$} & $41.88${\scriptsize $\pm 0.38$} & $85.64${\scriptsize $\pm 0.31$} & $86.66${\scriptsize $\pm 0.33$} & $77.20$ \\

\rowcolor{cpgcolor}
Spikformer-XNOR w/ Gray-PE & {\color{red}\cmark} & R & $109.8$ & $83.73${\scriptsize $\pm 0.45$} & $84.52${\scriptsize $\pm 0.39$} & $92.50${\scriptsize $\pm 0.33$} & $44.06${\scriptsize $\pm 0.48$} & $87.41${\scriptsize $\pm 0.36$} & $88.40${\scriptsize $\pm 0.30$} & $80.11$ \\

\rowcolor{cpgcolor}
Spikformer-XNOR w/ Log-PE & {\color{red}\cmark} & R & $109.8$ & $\bf83.88${\scriptsize $\pm 0.40$} & $\bf84.64${\scriptsize $\pm 0.37$} & $\bf92.80${\scriptsize $\pm 0.30$} & $\bf44.52${\scriptsize $\pm 0.43$} & $\bf87.95${\scriptsize $\pm 0.34$} & $88.46${\scriptsize $\pm 0.28$} & $\bf80.38$ \\
\hline

\hline
\hline
\end{tabular}
}
\end{center}
\vspace{-3mm}
\end{table}

Based on the results in Table \ref{tab:text_table}, it is evident that our proposed Gray-PE and Log-PE significantly outperform the other spiking positional encoding methods across several key benchmarks.
Both Gray-PE and Log-PE demonstrate superior accuracy on the English and Chinese datasets, with particularly notable improvements on MR, SST-2, Subj, and ChnSenti.
However, the performance of RPE on the Waimai dataset is not as strong as that of CPG-PE.
We attribute this to the nature of the dataset, which consists of user reviews often containing informal language, typos, or mixed expressions.
This noise can hinder the model's ability to extract meaningful patterns.
These results highlight the advantages of our proposed spiking RPE techniques, especially in handling the dependencies and varying word order in text classification tasks.
Unlike spiking absolute PE, i.e., CPG-PE, which struggles to adapt to the nuances of language, Gray-PE and Log-PE provide a more flexible and context-sensitive representation, improving the model's ability to classify sentences accurately.

\subsection{Patch-based Image Classification}

\begin{table*}[htp]
\vspace{-4mm}
\caption{
\label{tab:image_table}
Accuracy (\%) on image classification benchmarks.
Numbers with $^*$ denote our implementations.
The best and second-best results are highlighted in bold and underlined formats, respectively.
The results with shading are ours.
Results are averaged across $4$ random seeds.
}
\centering
\resizebox{\linewidth}{!}{
\begin{tabular}{l:c:c:cc:cc:cc:cc:c}
\toprule
\multirow{2}{*}{\textbf{Model}} & \multicolumn{2}{c:}{\bf PE} & \multicolumn{2}{c:}{\bf CIFAR10}  & \multicolumn{2}{c:}{\bf CIFAR10-DVS}  & \multicolumn{2}{c:}{\bf CIFAR100} & \multicolumn{2}{c:}{\bf Tiny-ImageNet} & \multirow{2}{*}{\textbf{Avg.}} \\
\cline{4-11}
\cline{2-3}
& \textbf{Spike} & \textbf{Type} & Param (M) & Acc & Param (M) & Acc & Param (M) & Acc & Param (M) & Acc\\
\hline

Vision-Transformer & \xmark & A & $9.32$ & $\bm{96.73}$ & -- & -- & $9.36$ & $\bm{81.02}$ & $9.40$ & $\bm{62.18}$ & --\\

\hline

Spikformer w/ Conv-PE (Original) & {\color{red} \cmark} & A & $9.32$ & $\;\;94.80^*$ & $2.57$ & $\;\;78.10^*$ & $9.36$ & $\;\;77.04^*$ & $9.40$ & $\;\;48.10^*$ & $74.51$\\

Spikformer w/ CPG-PE & {\color{red} \cmark} & A & $8.17$ & $95.06$ & $2.06$ & $\underline{78.40}$ & $8.20$ & $77.82$ & $8.24$ & $\;\;\underline{48.52}^*$ & $\underline{74.95}$\\



\rowcolor{cpgcolor}
Spikformer-XNOR w/ Gray-PE 1D & {\color{red} \cmark} & R & $8.00$ & $\underline{95.46}$ & $1.99$ & $77.90$ & $8.04$ & $\underline{78.12}$ & $8.08$ & $48.33$ & $\underline{74.95}$\\

\rowcolor{cpgcolor}
Spikformer-XNOR w/ Gray-PE 2D & {\color{red} \cmark} & R & $8.00$ & $\bf95.66$ & $1.99$ & $\bf78.70$ & $8.04$ & $\bf78.45$ & $8.08$ & $\bf48.74$ & $\bf75.39$\\






\bottomrule
\end{tabular}
}
\vspace{-3mm}
\end{table*}

In this section, we evaluate ViT-based SNNs, Spikformer,  which adopts a patch-splitting processing approach.
To enhance compatibility with this framework, we extend Gray-PE into a \textbf{2D form} and integrate it into the patch-based architecture.
The experimental results are summarized in Table \ref{tab:image_table}.
We draw conclusions that:

\textbf{(1) Gray-PE enhances the performance of Spikformer while maintaining parameter efficiency.}
Both 1D and 2D variants of Gray-PE consistently improve classification accuracy.
Notably, Gray-PE surpasses spiking absolute PE (CPG-PE), indicating its superior ability to model inter-patch dependencies within images, even as an approximation of RPE.

\textbf{(2) The 2D variant of Gray-PE demonstrates superior performance over its 1D counterpart in processing image patches.}
Empirical comparisons between Spikformers equipped with Gray-PE 1D and 2D reveal that the two-dimensional form is highly effective.
Specifically, Gray-PE 2D achieves an average accuracy improvement of $0.44\%$ over Gray-PE 1D.

Furthermore, we present the image classification performance of the state-of-the-art QKFormer integrated with our proposed RPE methods in \Cref{app:qk}.

\subsection{Capability of Processing Long Sequences} \label{app:long}

In this section, we assess the effectiveness of our proposed relative positional encoding methods in handling long sequences within spiking Transformers.
To this end, we use two text classification datasets characterized by long input samples: AGNEWS \citep{zhang2015character} and IMDB \citep{maas2011learning}.
Following \cite{li2021searching}, we fix the sequence max length to $1024$ for AGNEWS and $2048$ for IMDB.
We train the Spikformer model using various positional encoding strategies on these datasets, and present the results in Table \ref{tab:ultra-long}.

\begin{wraptable}[]{r}{0.6\textwidth}
\vspace{-4mm}
\centering
\begin{center}
\vspace{-3mm}
\caption{\label{tab:ultra-long}
Accuracy (\%) on $2$ long text classification benchmarks.
We set the sentence length to $1024$ for AGNEWS and $2048$ for IMDB.
}
\resizebox{\linewidth}{!}{
\begin{tabular}{l:c:c:c:c:c} \hline \hline
\multirow{2}{*}{\textbf{Model}} & \multicolumn{2}{c:}{\bf PE}  & \multirow{2}{*}{\textbf{AGNEWS}} & \multirow{2}{*}{\textbf{IMDB}} & \multirow{2}{*}{\textbf{Avg.}} \\
\cline{2-3}
& \textbf{Spike} & \textbf{Type} &  & &  \\
\hline

Fine-tuned BERT & \xmark & A  & $\bf94.50$ & $\bf92.10$ & $\bf93.30$\\
\hline
Spikformer w/ Conv-PE (Original) & {\color{red}\cmark} & A  & $83.84$ & $79.08$ & $81.46$\\

Spikformer w/ CPG-PE & {\color{red}\cmark} & A & $84.70$ & $79.47$ & $82.09$\\

\rowcolor{cpgcolor}
Spikformer-XNOR w/ Gray-PE & {\color{red}\cmark} & R & $84.92$ & $79.79$ & $82.36$\\

\rowcolor{cpgcolor}
Spikformer-XNOR w/ Log-PE & {\color{red}\cmark} & R & $\bf86.77$ & $\bf80.46$ & $\bf83.62$\\

\hline\hline
\end{tabular}
}
\end{center}
\vspace{-3mm}
\end{wraptable}

As shown in Table \ref{tab:ultra-long}, although Spikformer models lag behind the fine-tuned BERT in overall performance, both Log-PE and Gray-PE demonstrate effectiveness when handling long input sequences.
Notably, Log-PE yields substantial performance improvements, suggesting its strong suitability for processing long texts. This outcome is expected, as Log-PE is specifically designed to accommodate long-range dependencies.

\subsection{Discussion on Hardware-Friendliness and Computing Efficiency}

\begin{wraptable}[]{r}{0.55\textwidth}
\vspace{-4mm}
\centering
\begin{center}
\caption{\label{tab:time_use}
Evaluation of both time consumption and GPU memory usage for SNNs on Electricity dataset.
}
\resizebox{\linewidth}{!}{
\begin{tabular}{l:c:c} \hline \hline
\multirow{2}{*}{\textbf{Model}} & \textbf{Time Consumption}  & \textbf{GPU Memory Usage} \\
\cline{2-3}
& s/epoch & MB \\
\hline

Spikformer (Original) & $137.48$ & $10572.56$ \\
\hline
Spikformer-XNOR & $139.66$ & $10608.32$ \\

\rowcolor{cpgcolor}
Spikformer w/ CPG-PE & $140.56$ & $10963.88$ \\

\hline\hline
\end{tabular}
}
\end{center}
\vspace{-3mm}
\end{wraptable}

Although traditional SSA benefits from highly optimized matrix multiplication (GEMM) on GPUs, we would like to clarify that our XNOR-based SSA also retains computational efficiency for the following reasons:
First, the core of XNOR-based SSA relies on XNOR and bit-count operations, which are natively supported by digital hardware and neuromorphic processors.
These are much cheaper than floating-point multiplications and additions in terms of energy and hardware complexity.
Secondly, many neuromorphic accelerators (e.g., Loihi~\citep{davies2018loihi}, TrueNorth~\citep{akopyan2015truenorth}) natively support spike-based bitwise logic, making our XNOR mechanism better aligned with the target deployment platform than conventional floating-point matrix products.
Lastly, while matrix multiplication benefits from BLAS acceleration, XNOR and summation over dimensions are also highly parallelizable, and can be efficiently implemented using tensor intrinsics (e.g., \textit{bitwise\_xnor}, \textit{popcount}, \textit{reduce\_sum}).

We benchmarked both time consumption and GPU memory usage for SNNs in a time-series forecasting task, mainly on the Electricity dataset with 24 of horizon length, as shown in \ref{tab:time_use}.
For more analysis on the hardware-friendliness of Log-PE, please refer to the \Cref{app:hardware}.

\subsection{Analysis and Ablation}\label{exp:pe_analysis}
In this section, we analyze the following aspects:
\textbf{(1)} The influence of internal properties in Gray-PE,
\textbf{(2)} Ablation studies on XNOR and Log-PE (shown in Appendix \ref{app:ablation}).



\begin{wrapfigure}{r}{0.6\textwidth}
\centering
\subfigure[]{
\includegraphics[width=0.48 \linewidth]{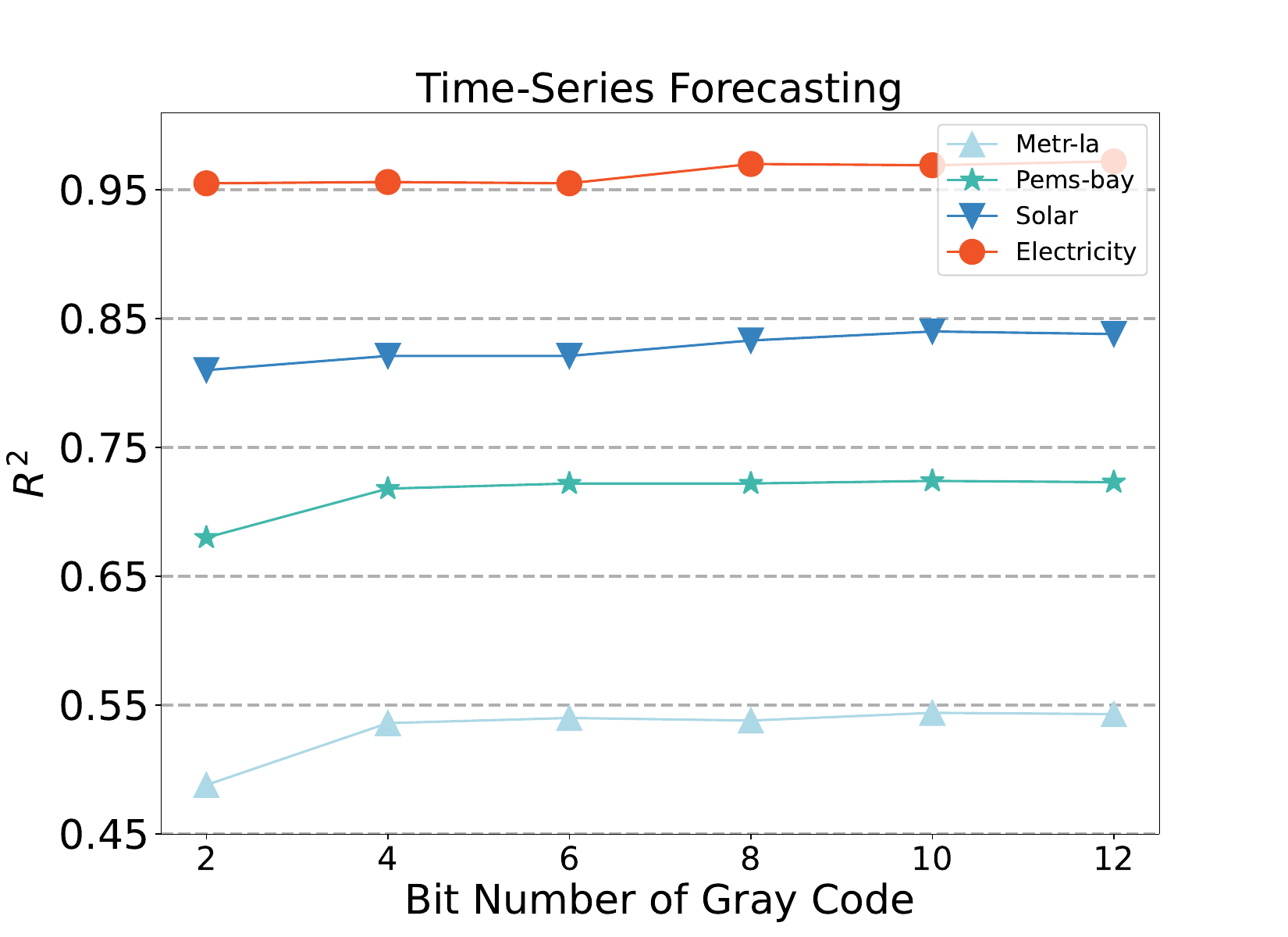}
}
\subfigure[]{
\includegraphics[width=0.46 \linewidth]{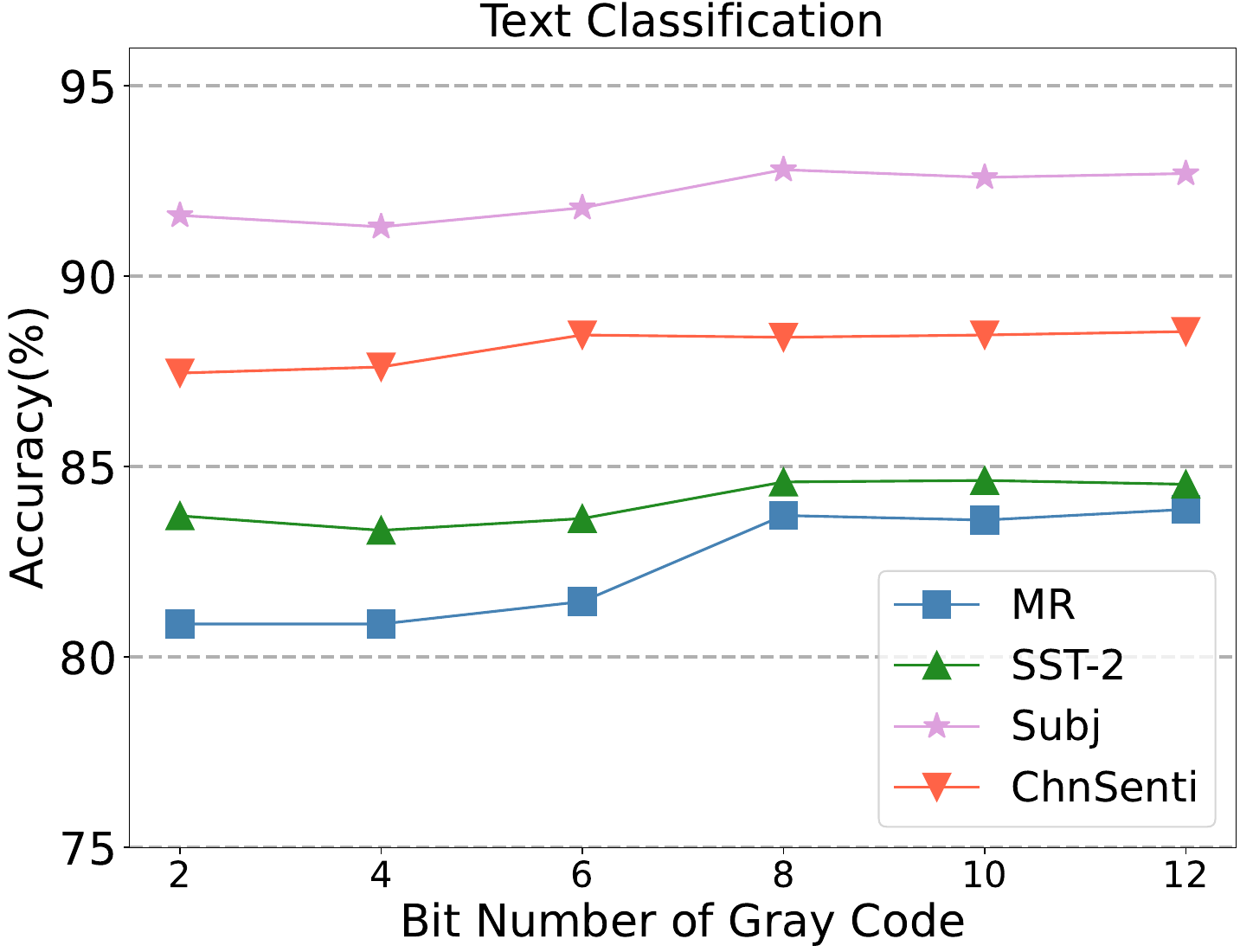}
}
\caption{
Spikformer-XNOR with Gray-PE across various bit numbers ranging from $2$ to $12$ on (a) time-series forecasting tasks and (b) text classification tasks. 
}
\label{fig:ablation}
\end{wrapfigure}

Consider that: 
If the number of bits used for encoding relative positions in Gray Code is $b$, then the total number of unique encodings possible is \( 2^b \).
We set the maximum sequence length is \( L \), so relative distances range from \( 0 \) to \( L-1 \).
According to the \textbf{pigeonhole principle}, if $L-1>2^b$, there will be at least two distances that are represented identically. 
This issue can be mitigated by increasing $b$ to cover the range of relative distances up to $L-1$.
From Figure \ref{fig:ablation} (a), we observe that for long-sequence datasets, such as Solar and Electricity (Length $= 168$), the number of bits should be at least $7$ to avoid Gray-PE missing relative positional information.
However, for shorter datasets like Metr-la (Length $=12$) and ChnSenti (Length $=32$), $5$ bits are sufficient. 

\section{Conclusion}
In this work, we have designed several RPE methods for spike Transformers.
Our approach preserves the spiking nature of SNNs while effectively representing relative positions. 
Experimental evaluations on time series forecasting, text classification, and image classification demonstrate significant performance improvements.
These empirical results, together with theoretical analysis of the proposed RPE methods, highlight the potential to enhance the versatility and applicability of SNNs across various domains.
Future work and limitations are discussed in Appendix \ref{app:limit}.

\section*{Broader Impact}
\label{sec:broader_impact}
This work aims to advance the field of spiking neural networks.
We hope our work can open new avenues for embedding relative positional encoding within SNNs, thereby expanding their applicability across a wide range of domains.
We do not see negative societal impacts of this work.

\section*{Acknowledge}

The authors would like to thank the anonymous reviewers for their valuable comments.
This work was partially supported by the National Natural Science Foundation of China (No. 62076068).

\bibliographystyle{unsrt}
\bibliography{main}

\newpage

\appendix

\setcounter{table}{0}
\setcounter{figure}{0}
\renewcommand{\thetable}{S\arabic{table}}
\renewcommand{\thefigure}{S\arabic{figure}}

\section{Proof of Theorem \ref{the:gray}} \label{app:proof}

This section provides a detailed mathematical proof of \Cref{the:gray}. We use the standard Reflected Binary Code (RBC) $G(x)$ as our Gray Code:
\begin{definition}
    The Reflected Binary Code (Gray Code) of an integer $x$ is defined as:
    \begin{equation}
        G(x) = x \oplus (x\gg 1),
    \end{equation}  
where \(\oplus\) denotes the bitwise XOR operation, and \(\gg\) denotes the arithmetic right shift.
\end{definition}

and we restate Theorem \ref{the:gray} here:

\newtheorem*{TheoremRepeat}{Theorem 1}
\begin{TheoremRepeat}
For any non-negative integer \(n\), and for any pair of decimal integers \(a\) and \(b = a + 2^n\), the Hamming distance between their Gray Code representations \(G(a)\) and \(G(b)\) is consistently:
\begin{equation}
d_H(G(a), G(b)) =
\begin{cases}
1 & \text{if } n = 0, \\
2 & \text{if } n \geq 1.
\end{cases}
\end{equation}
Here, Hamming distance is the number of different bits between two binary representations.
\end{TheoremRepeat}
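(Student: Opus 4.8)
The plan is to reduce the Hamming distance to a Hamming weight and then exploit the $\mathbb{F}_2$-linearity of the Gray-code map. Writing $\operatorname{wt}(x)$ for the number of one-bits of a non-negative integer $x$, I first observe that $d_H(G(a),G(b)) = \operatorname{wt}\!\big(G(a)\oplus G(b)\big)$, since the Hamming distance counts the positions where two strings disagree, which is exactly where their XOR equals $1$. The key first step is to prove that $G$ is linear over $\mathbb{F}_2$, that is, $G(a)\oplus G(b) = G(a\oplus b)$. This follows from the definition $G(x)=x\oplus(x\gg 1)$ together with the fact that the right shift commutes with XOR: bit $i$ of $x\gg 1$ is bit $i+1$ of $x$, so $(x\gg 1)\oplus(y\gg 1) = (x\oplus y)\gg 1$, and combining this with the associativity and commutativity of XOR gives the claim. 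Hence it suffices to compute $\operatorname{wt}\!\big(G(c)\big)$ with $c := a\oplus b$ and $b=a+2^n$.

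Next I would pin down the bit pattern of $c = a\oplus(a+2^n)$ by analyzing carry propagation. Adding $2^n$ to $a$ triggers a carry starting at bit position $n$: if bits $n,n+1,\dots,n+k-1$ of $a$ are all $1$ and bit $n+k$ is the first $0$ at or above position $n$, then the addition flips precisely bits $n,\dots,n+k-1$ from $1$ to $0$ and bit $n+k$ from $0$ to $1$, leaving every other bit unchanged. Therefore $c$ has ones exactly in the contiguous block of positions $n,n+1,\dots,n+k$ and zeros elsewhere; that is, $c$ is a single run of $k+1$ consecutive ones whose lowest one sits at position $n$. The sub-case in which every bit of $a$ from position $n$ upward equals $1$ is handled identically, with $n+k$ being a freshly created leading bit.

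Finally I would compute $\operatorname{wt}(G(c))$ for such a run. Since bit $i$ of $G(c)=c\oplus(c\gg 1)$ equals $c_i\oplus c_{i+1}$, the weight of $G(c)$ counts the boundaries of the run of ones in $c$, i.e. the adjacent positions at which $c$ transitions between $0$ and $1$. For $n\ge 1$ there are two such boundaries, the lower edge between positions $n-1$ and $n$ (valid since $n-1\ge 0$) and the upper edge between $n+k$ and $n+k+1$, giving $\operatorname{wt}(G(c))=2$. For $n=0$ the run reaches the least-significant bit, so the lower edge disappears and only the upper boundary survives, giving $\operatorname{wt}(G(c))=1$. This dichotomy is exactly the statement. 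I expect the main obstacle to be the carry-propagation analysis of the second step: one must argue carefully that $a\oplus(a+2^n)$ is always a single contiguous block of ones anchored at position $n$ regardless of the high bits of $a$, since it is precisely the presence of this block's lower edge, which holds if and only if $n\ge 1$, that produces the two-case answer.
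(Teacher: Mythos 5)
Your proof is correct, and it is organized along a genuinely different line from the paper's, even though both ultimately rest on the same carry-propagation computation. The paper regroups $G(a)\oplus G(b)$ into $\Delta_1 = a\oplus(a+2^n)$ and $\Delta_2 = (a\gg 1)\oplus((a+2^n)\gg 1)$, computes each by hand, and splits into cases: $n=0$ is dispatched by citing the well-known adjacency property of Gray codes, and $n\geq 1$ is further split on whether bit $n$ of $a$ is $0$ (no carry) or $1$ (carry). You instead make explicit the $\mathbb{F}_2$-linearity $G(a)\oplus G(b)=G(a\oplus b)$ — a fact the paper uses only implicitly, since its $\Delta_2$ is exactly $\Delta_1\gg 1$ — and then reduce everything to one statement: $c=a\oplus(a+2^n)$ is a single contiguous run of ones anchored at position $n$, and $\operatorname{wt}(G(c))$ counts the boundaries of that run. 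This buys you uniformity: the carry and no-carry situations are the single parameter $k\geq 0$, and the $n=0$ case needs no external citation, since it is precisely the disappearance of the lower run boundary at position $n-1$. What the paper's version buys in exchange is explicitness — it exhibits the two surviving bit positions ($c$ and $n-1$) concretely, at the cost of more case bookkeeping (and indeed the paper's Subcase B contains a small slip, writing $\Delta_1\oplus\Delta_2 = 2^c+2^n$ where it should be $2^c+2^{n-1}$, though the accompanying prose identifies the correct positions; your boundary-counting formulation avoids this kind of index error by construction). Your lemma $d_H(G(a),G(b))=\operatorname{wt}(G(a\oplus b))$ is also strictly more general: it reduces the Hamming distance for \emph{any} pair of integers to the run structure of their XOR, which would let you read off $d_H$ for difference patterns beyond powers of two.
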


\begin{proof}

For \(n \geq 1\), consider \(b = a + 2^n\). We analyze the XOR of their Gray Codes:
\begin{equation}
G(a) \oplus G(a + 2^n) =  \left[ a \oplus \left( a \gg 1 \right) \right] \oplus \left[ (a + 2^n) \oplus \left( (a + 2^n) \gg 1 \right) \right].
\end{equation}
Using the associativity and commutativity of XOR, we regroup:
\begin{equation}
G(a) \oplus G(a + 2^n) =  \left[ a \oplus \left( a + 2^n\right) \right] \oplus \left[ (a \gg 1) \oplus \left( (a + 2^n) \gg 1 \right) \right].
\end{equation}
Let us denote:
\begin{equation}
    \Delta_{1}=a \oplus\left(a+2^{n}\right), \quad \Delta_{2}=(a \gg 1) \oplus\left(\left(a+2^{n}\right) \gg 1\right) .
\end{equation}

\subsection*{Case 1: $n = 0$}
In this case, $b = a + 1$, and it is well known that adjacent integers in the Gray code differ by exactly one bit.
Therefore, we have $d_H(G(a), G(b)) = d_H(G(a), G(a+1)) = 1$.

\subsection*{Case 2: $n \ge 1$}
We consider two subcases based on the bit at position $n$ in $a$.
\subsubsection*{Subcase A: Bit $n$ in $a$ is 0}

Then $a + 2^n$ flips bit $n$ from 0 to 1, with no carry. Hence:
\begin{equation}
    \Delta_{1}=2^{n}, \quad \Delta_{2}=2^{n-1}.
\end{equation}
Therefore,
\begin{equation}
    G(a)\oplus G(b)=2^{n}\oplus2^{n-1}.
\end{equation}
This value has exactly two bits set (at positions $n$ and $n-1$), so the Hamming distance is 2.

\subsubsection*{Subcase B: Bit $n$ in $a$ is 1}
Then adding $2^n$ to $a$ causes a carry from bit $n$ upwards. Let $c$ be the smallest index greater than $n$ such that bit $c$ in $a$ is $0$; bits $n$ through $c-1$ are all $1$.
Then:
\begin{equation}
    \Delta_{1}=a\oplus(a+2^{n}) = \sum_{i=n}^{c}2^i = \underbrace{1\dots1}_{c-n+1}\underbrace{0\dots0}_n\ _{(2)},
\end{equation}
has ones at bits $n$ through $c$. We denote $\cdot_{(2)}$ for binary representation. Similarly,
\begin{equation}
    \Delta_{2}=(a \gg 1) \oplus ((a+2^n)\gg1) = \sum_{i=n-1}^{c-1}2^i = \underbrace{1\dots1}_{c-n+1}\underbrace{0\dots0}_{n-1}\ _{(2)},
\end{equation}
has ones at bits $n-1$ through $c-1$.
Thus, 
\begin{equation}
G(a) \oplus G(b)=\Delta_{1} \oplus \Delta_{2} = (\sum_{i=n}^{c}2^i)\oplus(\sum_{i=n-1}^{c-1}2^i) = 2^c + 2^n
\end{equation}
has ones at only positions $c$ and $n-1$, and all other bits are canceled due to alignment.
The result has exactly two bits set, so the Hamming distance is $2$.

Combining all cases, we conclude that for any non-negative integer \(n\):
\begin{equation}
d_H(G(a), G(a + 2^n)) =
\begin{cases}
1 & \text{if } n = 0, \\
2 & \text{if } n \geq 1.
\end{cases}
\end{equation}
    
\end{proof}

This rigorously proves the observed property of Gray Codes concerning the Hamming distance between numbers differing by powers of two.

\section{Ablation Study on Log-PE and XNOR}\label{app:ablation}

In this section, we compare the performance of vanilla spiking Transformers with Log-PE, XNOR variants with Log-PE, and models equipped with complete relative positional encoding (C-RPE).
As discussed in \Cref{sec:extended}, C-RPE is implemented by setting $R_{i,j} = \frac{L-1}{|i-j|+1}$ and adding it directly to the attention scores.

\begin{wraptable}[]{r}{0.6\textwidth}
\centering
\vspace{-6mm}
\caption{
\label{tab:analysis}
Ablation study on XNOR and Log-PE.
We take the time-series forecasting performance of SNNs on Metr-la and Electricity as examples.
C-RPE denotes Complete RPE.
$\uparrow$ ($\downarrow$) indicates that the higher (lower) the better.
$^*$ denotes failure to converge.
}
\resizebox{1.0\linewidth}{!}{
\begin{tabular}{l|c:c:c:c}
\toprule
\hline
\multirow{2}{*}{\textbf{Model} ({\small Prediction Length $=24$})} & \multicolumn{2}{c:}{\bf Metr-la ({\small $L=12$})} & \multicolumn{2}{c}{\bf Electricity ({\small$L=168$})} \\
\cline{2-5}
& R$^2$ $\uparrow$ & RSE $\downarrow$ & R$^2$ $\uparrow$ & RSE $\downarrow$ \\
\hline


Spikformer w/ Log-PE & $.484$ & $.763$ & $.710^*$ & $1.03^*$ \\

\rowcolor{cpgcolor}
Spikformer-XNOR w/ Log-PE & $\bf.535$ & $\bf.719$ & $\bf.974$ & $\bf.300$ \\

Spikformer-XNOR w/ C-RPE & $.158^*$ & $.967^*$ & $ .710^*$ & $1.03^*$ \\

\hline


QKFormer w/ Log-PE & $.475$ & $.788$ & $.710^*$ & $1.03^*$ \\

\rowcolor{cpgcolor}
QKFormer-XNOR w/ Log-PE & $\bf.541$ & $\bf.715$ & $\bf.974$ & $\bf.285$ \\

QKFormer-XNOR w/ C-RPE & $.430$ & $.824$ & $ .710^*$ & $1.03^*$ \\
\hline
\bottomrule
\end{tabular}
}
\vspace{-3mm}
\end{wraptable}

As shown in \Cref{tab:analysis}, both Spikformer and QKFormer with C-RPE perform significantly worse than their Log-PE counterparts, with some variants failing to converge entirely.
This degradation is attributed to the overly large positional encodings disrupting the training dynamics.
Furthermore, observed from vanilla Spikformer with Log-PE, we confirm that the dot product, which does not use Hamming distance to measure relative distance, is not suitable for RPE methods.


\section{Performance of QKFormers on Image Classification}\label{app:qk}

In this section, we conduct experiments on current SOTA spiking Transformer, QKFormer \citep{zhou2024qkformer}.

\begin{table*}[htp]
\vspace{-4mm}
\caption{
\label{tab:qk}
Accuracy (\%) of QKFormer on image classification benchmarks.
Numbers with $^*$ denote our implementations.
``PE'' stands for positional encoding.
``R'' denotes relative PE, while ``A'' denotes absolute PE.
Results are averaged across $4$ random seeds.
}
\centering
\resizebox{\linewidth}{!}{
\begin{tabular}{l:c:c:cc:cc:cc:cc:c}
\toprule
\multirow{2}{*}{\textbf{Model}} & \multicolumn{2}{c:}{\bf PE} & \multicolumn{2}{c:}{\bf CIFAR10}  & \multicolumn{2}{c:}{\bf CIFAR10-DVS}  & \multicolumn{2}{c:}{\bf CIFAR100} & \multicolumn{2}{c:}{\bf Tiny-ImageNet} & \multirow{2}{*}{\textbf{Avg.}} \\
\cline{4-11}
\cline{2-3}
& \textbf{Spike} & \textbf{Type} & Param (M) & Acc & Param (M) & Acc & Param (M) & Acc & Param (M) & Acc\\
\hline

Vision-Transformer & \xmark & A & $9.32$ & $\bm{96.73}$ & -- & -- & $9.36$ & $\bm{81.02}$ & $9.40$ & $\bm{62.18}$ & --\\

\hline

QKFormer w/ Conv PE (Original) & -- & -- & $6.74$ & $\;\;\underline{96.32}^*$ & $1.50$ & $\;\;\bf83.40^*$ & $6.76$ & $\;\;\bf80.90^*$ & $6.78$ & $\;\;\bf58.07^*$ & $\bf79.67$\\

QKFormer w/ CPG-PE & {\color{red} \cmark} & A & $7.01$ & $96.30$ & $1.58$ & $82.00$ & $7.04$ & $80.52$ & $7.08$ & $\;\;56.75^*$ & $78.89$\\


\rowcolor{cpgcolor}
QKFormer-XNOR w/ Gray-PE 1D & {\color{red} \cmark} & R & $6.02$ & $96.22$ & $1.41$ & $82.20$ & $6.04$ & $80.48$ & $6.06$ & $57.21$ & $79.03$\\

\rowcolor{cpgcolor}
QKFormer-XNOR w/ Gray-PE 2D & {\color{red} \cmark} & R
& $6.02$ & $\bf96.36$ & $1.41$ & $\underline{83.10}$ & $6.04$ & $\underline{80.82}$ & $6.06$ & $\underline{57.94}$ & $\underline{79.56}$\\

\bottomrule
\end{tabular}
}
\vspace{-4mm}
\end{table*}

As shown in Table \ref{tab:qk}, we find that:
QKFormer exhibits insensitivity to positional encoding in image classification.
QKFormer exhibits minimal performance gains, or even degradation, when augmented with PE techniques, including both CPG-PE and Gray-PE.
We attribute this to its attention design, which aggregates queries along the channel dimension before the dot product with keys.
This design inherently biases the model toward spatially specific features while suppressing temporal dependencies.
Previous studies \cite{chen2021dpt, muhammad2022patch, zhu2023patch, wang2023adaptive} have shown that patch-based image classification primarily focuses on spatial (i.e., channel-wise) information rather than the sequential dependencies between patches.
This contrasts with sequence modeling tasks such as time-series forecasting and text classification, where capturing inter-token dependencies is crucial.
In image classification, our positional encoding encourages the model to emphasize sequential relationships between patches, which introduces a conflict with the QKFormer’s attention mechanism, ultimately hindering performance in this domain.

\section{Experimental Settings}\label{app:exp}

\subsection{Datasets}

\subsubsection{Time-series Forecasting}
We strictly follow the dataset settings of CPG-PE \cite{lv2024advancing}.
The datasets we used are as follows:
Metr-la \citep{li2017diffusion}: Average traffic speed data collected from highways in Los Angeles County.
Pems-bay \citep{li2017diffusion}: Average traffic speed data from the Bay Area.
Electricity \citep{lai2018modeling}: Hourly electricity consumption data in kilowatt-hours (kWh) of $321$ clients.
Solar \citep{lai2018modeling}: Solar power production.
The detailed statistical characteristics and distribution ratios for each dataset are presented below:

\begin{table}[h]
\centering
\caption{The statistics of time-series datasets.}
\label{tab:exp setting}
\resizebox{0.7\linewidth}{!}{
\begin{tabular}{lcccc}
\hline
Dataset & Samples & Variables & Observation Length & Train-Valid-Test Ratio \\
\hline
Metr-la & $34,272$ & $207$ & $12, (\operatorname{short-term})$ & $(0.7, 0.2, 0.1)$ \\
Pems-bay & $52,116$ & $325$ & $12, (\operatorname{short-term})$  & $(0.7, 0.2, 0.1)$ \\
Solar-energy & $52,560$ & $137$ & $168, (\operatorname{long-term})$ & $(0.6, 0.2, 0.2)$  \\
Electricity & $26,304$ & $321$ & $168, (\operatorname{long-term})$ & $(0.6, 0.2, 0.2)$ \\
\hline
\end{tabular}
}
\end{table}

\subsubsection{Text Classification}
For text classification, we follow \cite{lv2023spiking} to conduct experiments on six easy discrimination tasks, covering both English and Chinese datasets.
Here are the datasets we used in text classification experiments:

AGNEWS \citep{zhang2015character} is a large-scale text classification benchmark derived from AG's corpus of news articles, containing $120,000$ training samples and $7,600$ valid samples evenly distributed across four categories—World, Sports, Business, and Science/Technology.
IMDB \citep{maas2011learning} is a benchmark for binary sentiment classification, containing $50,000$ movie reviews labeled as positive or negative, split evenly into training and test sets to evaluate natural language understanding and opinion mining models.
The MR dataset, which stands for Movie Review, contains movie-review documents labeled based on their overall sentiment polarity (positive or negative) or subjective rating \citep{Pang2005SeeingSE}.
SST-$5$ includes $11,855$ sentences from movie reviews for sentiment classification across five categories: very negative, negative, neutral, positive, and very positive \citep{Socher2013RecursiveDM}.
SST-$2$ is the binary version of SST-$5$, containing only two classes: positive and negative. The Subj dataset is designed to classify sentences as either subjective or objective\footnote{\scriptsize{\url{https://www.cs.cornell.edu/people/pabo/movie-review-data/}}}.
ChnSenti consists of approximately $7,000$ Chinese hotel reviews, each annotated with a positive or negative label\footnote{\scriptsize{\url{https://raw.githubusercontent.com/SophonPlus/ChineseNlpCorpus/master/datasets/ChnSentiCorp_htl_all/ChnSentiCorp_htl_all.csv}}}. 
Waimai contains around $12,000$ Chinese user reviews from a food delivery platform, intended for binary sentiment classification (positive and negative)\footnote{\scriptsize{\url{https://raw.githubusercontent.com/SophonPlus/ChineseNlpCorpus/master/datasets/waimai_10k/waimai_10k.csv}}}.

\subsubsection{Image Classification}
Here are the datasets we used in image classification experiments:

The CIFAR dataset is one of the most widely used benchmarks for image classification, comprising a collection of $60,000$ color images, each with a resolution of $32\times32$ pixels.
These images are partitioned into $50,000$ training samples and $10,000$ test samples.
The dataset includes 10 classes, each containing $6,000$ images, and spans a variety of object categories such as airplanes, cars, birds, and cats.
The relatively low resolution of the images makes the dataset a challenging benchmark for evaluating model performance in small-scale image classification tasks. 

The Tiny-ImageNet dataset is a simplified subset of the original ImageNet, designed for efficient experimentation in image classification and deep learning research. It consists of $200$ object classes, each containing $500$ training images, $50$ validation images, and $50$ test images (totaling $100,000$ training, $10,000$ validation, and $10,000$ test images).
All images are downsampled to a resolution of $64\times64$ pixels, balancing computational feasibility with visual complexity
Designed for efficient deep learning research, it reduces computational costs while maintaining diversity. 

The CIFAR10-DVS dataset represents a neuromorphic adaptation of the original CIFAR10 set, where static images have been converted into dynamic representations that simulate the recording capabilities of a Dynamic Vision Sensor (DVS) camera. 
Unlike traditional cameras, a DVS captures changes in the scene as individual events, rather than capturing full-frame images at fixed time intervals. This conversion results in a dataset that is more aligned with how biological vision systems process information.
The CIFAR10-DVS dataset consists of $9,000$ training samples and $1,000$ test samples, with a higher resolution of $128\times128$ pixels compared to the original CIFAR10.
The event-driven nature of this dataset presents unique challenges in terms of processing and model adaptation, as it requires handling sparse, asynchronous event streams rather than dense, synchronous pixel data.
This dataset is particularly valuable for testing models designed for neuromorphic systems and event-based vision tasks, offering a more realistic and biologically plausible approach to image classification.

\subsection{Time-series Forecasting}

\paragraph{Metrices}
The metrics we used in time-series forecasting are the coefficient of determination (R$^2$) and the Root Relative Squared Error (RSE).
\begin{align}
R^2&=\frac1{MCL}\sum_{m=1}^M\sum_{c=1}^C\sum_{l=1}^L\left[1-\frac{(Y^m_{c,l}-\hat{Y}^m_{c,l})^2}{(Y^m_{c,l}-\bar{Y}_{c,l})^2}\right],\\
\mathrm{RSE}&=\sqrt{\frac{\sum_{m=1}^M||\mathbf{Y}^m-\hat{\mathbf{Y}}^m||^2}{\sum_{m=1}^M||\mathbf{Y}^m-\bar{\mathbf{Y}}||^2}}.
\end{align}
In these formulas, $M$ represents the size of the test set, $C$ denotes the number of channels, and $L$ signifies the length of the predictions. $\bar{\mathbf{Y}}$ is the average of $\mathbf{Y}^m$. The term $Y^m_{c,l}$ refers to the $l$-th future value of the $c$-th variable for the $m$-th sample, while $\bar{Y}{c,l}$ represents the mean of $Y^m{c,l}$ across all samples. The symbols $\hat{\mathbf{Y}}^m$ and $\hat{Y}_{c,l}^{m}$ are used to denote the predicted values.
Compared to Mean Squared Error (MSE) or Mean Absolute Error (MAE), these metrics exhibit greater resilience to the absolute values of the datasets, making them especially valuable in time-series forecasting tasks.

\paragraph{Model Architecture}
All SNNs take $4$ time steps for spiking neurons.
We construct all Spikformer as $2$ blocks, setting the feature dimension as $256$, and the hidden feature dimension in FFN as $1024$.
As for QKFormer, we set the block number as $4$, $2$ of which are QK blocks and the other $2$ are Spikformer blocks.

\paragraph{Training Hyper-parameters}
we set the training batch size as $32$ and adopt Adam \cite{kingma2014adam} optimizer with a cosine scheduler of learning rate $1\times 10^{-4}$.
An early stopping strategy with a tolerance of $30$ epochs is adopted.
For other configurations, we honestly follow the SeqSNN framework \footnote{\url{https://github.com/microsoft/SeqSNN}} proposed by \cite{lv2024efficient}.
We conducted time-series forecasting experiments on 24G-V100 GPUs.
On average, a single experiment takes about $1$ hour under the settings above.

\subsection{Text Classification}
\paragraph{Model Achirecture}
All Spikformers are with $12$ encoder blocks and $768$
feature embedding dimension.
We have substituted layer normalization of SpikeBERT \citep{Lv2023SpikeBERTAL} with batch normalization in our directly-trained Spikformer models for text classification tasks.

\paragraph{Training Hyper-parameters}
We directly trained Spikformers with arctangent surrogate gradients on all datasets.
We use the BERT-Tokenizer in Huggingface\footnote{\url{https://huggingface.co/}} to tokenize the sentences to token sequences.
We pad all samples to the same sequence length of $256$.
We conducted text classification experiments on $4$ RTX-3090 GPUs, and set the batch size as $32$, optimizer as AdamW \cite{loshchilov2018decoupled} with weight decay of $5 \times 10^{-3}$, and set a cosine scheduler of starting learning rate of $5\times 10^{-4}$.
What's more, in order to speed up the training stage, we adopt the automatic mixed precision training strategy.
On average, a single experiment takes about $1.5$ hours under the settings above.

\subsection{Image Classification}

\paragraph{Model Architecture}
For all Spikformer models, we standardized the configuration to include $4$ time steps.
Specifically, for the CIFAR10 and CIFAR100 datasets, the models were uniformized with $4$ encoder blocks and a feature embedding dimension of $384$.
For the CIFAR10-DVS dataset, the models were adjusted to have $2$ encoder blocks and a feature embedding dimension of $256$.
For all QKFormers, we set the block number as $4$, where $2$ blocks are QK blocks and the other $2$ are Spikformer blocks.

\paragraph{Training Hyper-parameters}
We honestly follow the experimental settings in Spikformer \citep{Zhou2022SpikformerWS} and QKFormer \citep{zhou2024qkformer}, whose source code and configuration files are available at \url{https://github.com/ZK-Zhou/spikformer} and \url{https://github.com/zhouchenlin2096/QKFormer}.
As the training epochs are quite big ($300$ or $400$ epochs) in their settings, we choose to use one 80G-A100 GPU, and it takes about $3$ hours to conduct a single experiment, on average.

\section{Analysis on Hardware-Friendliness of Log-PE}\label{app:hardware}

First, Log-PE doesn't need to perform logarithmic operations directly on hardware during inference.
Specifically, the relative position bias is defined as $\mathbf{R}_{i,j} = \begin{bmatrix} \lceil \log_{2} \left( \frac{L-1}{|i - j| + 1} \right) \rceil \end{bmatrix}$, where $\lceil \cdot \rceil$ denotes the ceiling (round-up) function, and $L$ is the maximum sequence length.
Since this bias depends only on the relative positions and the predefined sequence length, the entire bias matrix can be computed offline and stored ahead of time, eliminating the need for any runtime computation.

Secondly, even if one wishes to compute the logarithmic transformation on hardware, this can be efficiently achieved using a \textbf{lookup table (LUT)} implementation. 
Given an unsigned integer input of $N$ bits, we partition the input range into $K$ intervals.
Each interval is approximated using a \textbf{piecewise linear function} $y = ax + b$, with the parameters $(a, b)$ stored in the LUT.
The total LUT storage cost is: $K \cdot (N + 2P) \ \text{bits} \approx \frac{K \cdot (N + 2P)}{8} \ \text{bytes},$ where $P$ is the bit width of the parameters.

This strategy is similar to existing SNN approximations for exponential/leaky functions and has been successfully deployed in many types of neuromorphic chips, such as Intel Loihi~\citep{davies2018loihi}. 
Hence, the hardware implementation of Log-PE is efficient, low-cost, and practically feasible.

\section{Limitations and Future Work}\label{app:limit}

\subsection{Limitations}
Despite the promising enhancements introduced by our relative positional encoding method for spiking Transformers, several limitations must be acknowledged.
Firstly, the current implementation may encounter scalability issues when applied to extremely long input (such as ultra-long texts with the length of $10240$) sequences.
Additionally, while Gray-PE and Log-PE effectively preserve binary spike representations, they may limit the flexibility and adaptability of the encoding scheme across diverse data modalities and task requirements. 
Furthermore, our evaluations have been confined to specific applications such as time series forecasting, text classification, and image patch classification, which may not fully capture the method's performance in other domains, such as object detection \cite{luo2025integer} and real-world geometry representation \cite{liao2024spiking}.

\subsection{Future Work}
Future work should focus on optimizing the Gray Code-based RPE to enhance its scalability and efficiency, enabling its deployment in larger and more intricate SNN models.
Exploring alternative encoding strategies or hybrid approaches could provide greater flexibility and improve the robustness of positional encoding across various data types and tasks.
Expanding the scope of evaluation to include a wider range of applications would offer a more comprehensive understanding of the method's effectiveness.  
Additionally, integrating Gray Code-based RPE with other advanced neural network components, such as attention mechanisms or neuromorphic hardware, could further elevate the performance and practical utility of SNNs.
These efforts will contribute to the advancement of more versatile and powerful biologically inspired neural network architectures.

\newpage
\section*{NeurIPS Paper Checklist}

\begin{enumerate}

\item {\bf Claims}
    \item[] Question: Do the main claims made in the abstract and introduction accurately reflect the paper's contributions and scope?
    \item[] Answer: \answerYes{} 
    \item[] Justification: We have clarified our claims in the abstract and introduction.
    \item[] Guidelines:
    \begin{itemize}
        \item The answer NA means that the abstract and introduction do not include the claims made in the paper.
        \item The abstract and/or introduction should clearly state the claims made, including the contributions made in the paper and important assumptions and limitations. A No or NA answer to this question will not be perceived well by the reviewers. 
        \item The claims made should match theoretical and experimental results, and reflect how much the results can be expected to generalize to other settings. 
        \item It is fine to include aspirational goals as motivation as long as it is clear that these goals are not attained by the paper. 
    \end{itemize}

\item {\bf Limitations}
    \item[] Question: Does the paper discuss the limitations of the work performed by the authors?
    \item[] Answer: \answerYes{} 
    \item[] Justification: We have discussed the limitations and future work in Appendix \ref{app:limit}.
    \item[] Guidelines:
    \begin{itemize}
        \item The answer NA means that the paper has no limitation while the answer No means that the paper has limitations, but those are not discussed in the paper. 
        \item The authors are encouraged to create a separate "Limitations" section in their paper.
        \item The paper should point out any strong assumptions and how robust the results are to violations of these assumptions (e.g., independence assumptions, noiseless settings, model well-specification, asymptotic approximations only holding locally). The authors should reflect on how these assumptions might be violated in practice and what the implications would be.
        \item The authors should reflect on the scope of the claims made, e.g., if the approach was only tested on a few datasets or with a few runs. In general, empirical results often depend on implicit assumptions, which should be articulated.
        \item The authors should reflect on the factors that influence the performance of the approach. For example, a facial recognition algorithm may perform poorly when image resolution is low or images are taken in low lighting. Or a speech-to-text system might not be used reliably to provide closed captions for online lectures because it fails to handle technical jargon.
        \item The authors should discuss the computational efficiency of the proposed algorithms and how they scale with dataset size.
        \item If applicable, the authors should discuss possible limitations of their approach to address problems of privacy and fairness.
        \item While the authors might fear that complete honesty about limitations might be used by reviewers as grounds for rejection, a worse outcome might be that reviewers discover limitations that aren't acknowledged in the paper. The authors should use their best judgment and recognize that individual actions in favor of transparency play an important role in developing norms that preserve the integrity of the community. Reviewers will be specifically instructed to not penalize honesty concerning limitations.
    \end{itemize}

\item {\bf Theory assumptions and proofs}
    \item[] Question: For each theoretical result, does the paper provide the full set of assumptions and a complete (and correct) proof?
    \item[] Answer: \answerYes{} 
    \item[] Justification: We have provided the full set of assumptions and a complete (and correct) proof in the Method Section and Appendix \ref{app:proof}.
    \item[] Guidelines:
    \begin{itemize}
        \item The answer NA means that the paper does not include theoretical results. 
        \item All the theorems, formulas, and proofs in the paper should be numbered and cross-referenced.
        \item All assumptions should be clearly stated or referenced in the statement of any theorems.
        \item The proofs can either appear in the main paper or the supplemental material, but if they appear in the supplemental material, the authors are encouraged to provide a short proof sketch to provide intuition. 
        \item Inversely, any informal proof provided in the core of the paper should be complemented by formal proofs provided in appendix or supplemental material.
        \item Theorems and Lemmas that the proof relies upon should be properly referenced. 
    \end{itemize}

    \item {\bf Experimental result reproducibility}
    \item[] Question: Does the paper fully disclose all the information needed to reproduce the main experimental results of the paper to the extent that it affects the main claims and/or conclusions of the paper (regardless of whether the code and data are provided or not)?
    \item[] Answer: \answerYes{} 
    \item[] Justification: We have shown our experiment results in the Experiment Section. We have submitted our source code in the Supplementary Material. We will upload our code and data to GitHub upon acceptance.
    \item[] Guidelines:
    \begin{itemize}
        \item The answer NA means that the paper does not include experiments.
        \item If the paper includes experiments, a No answer to this question will not be perceived well by the reviewers: Making the paper reproducible is important, regardless of whether the code and data are provided or not.
        \item If the contribution is a dataset and/or model, the authors should describe the steps taken to make their results reproducible or verifiable. 
        \item Depending on the contribution, reproducibility can be accomplished in various ways. For example, if the contribution is a novel architecture, describing the architecture fully might suffice, or if the contribution is a specific model and empirical evaluation, it may be necessary to either make it possible for others to replicate the model with the same dataset, or provide access to the model. In general. releasing code and data is often one good way to accomplish this, but reproducibility can also be provided via detailed instructions for how to replicate the results, access to a hosted model (e.g., in the case of a large language model), releasing of a model checkpoint, or other means that are appropriate to the research performed.
        \item While NeurIPS does not require releasing code, the conference does require all submissions to provide some reasonable avenue for reproducibility, which may depend on the nature of the contribution. For example
        \begin{enumerate}
            \item If the contribution is primarily a new algorithm, the paper should make it clear how to reproduce that algorithm.
            \item If the contribution is primarily a new model architecture, the paper should describe the architecture clearly and fully.
            \item If the contribution is a new model (e.g., a large language model), then there should either be a way to access this model for reproducing the results or a way to reproduce the model (e.g., with an open-source dataset or instructions for how to construct the dataset).
            \item We recognize that reproducibility may be tricky in some cases, in which case authors are welcome to describe the particular way they provide for reproducibility. In the case of closed-source models, it may be that access to the model is limited in some way (e.g., to registered users), but it should be possible for other researchers to have some path to reproducing or verifying the results.
        \end{enumerate}
    \end{itemize}

\item {\bf Open access to data and code}
    \item[] Question: Does the paper provide open access to the data and code, with sufficient instructions to faithfully reproduce the main experimental results, as described in supplemental material?
    \item[] Answer: \answerYes{} 
    \item[] Justification: We have submitted our source code in the Supplementary Material. We will upload our code and data to GitHub upon acceptance.
    \item[] Guidelines:
    \begin{itemize}
        \item The answer NA means that paper does not include experiments requiring code.
        \item Please see the NeurIPS code and data submission guidelines (\url{https://nips.cc/public/guides/CodeSubmissionPolicy}) for more details.
        \item While we encourage the release of code and data, we understand that this might not be possible, so “No” is an acceptable answer. Papers cannot be rejected simply for not including code, unless this is central to the contribution (e.g., for a new open-source benchmark).
        \item The instructions should contain the exact command and environment needed to run to reproduce the results. See the NeurIPS code and data submission guidelines (\url{https://nips.cc/public/guides/CodeSubmissionPolicy}) for more details.
        \item The authors should provide instructions on data access and preparation, including how to access the raw data, preprocessed data, intermediate data, and generated data, etc.
        \item The authors should provide scripts to reproduce all experimental results for the new proposed method and baselines. If only a subset of experiments are reproducible, they should state which ones are omitted from the script and why.
        \item At submission time, to preserve anonymity, the authors should release anonymized versions (if applicable).
        \item Providing as much information as possible in supplemental material (appended to the paper) is recommended, but including URLs to data and code is permitted.
    \end{itemize}

\item {\bf Experimental setting/details}
    \item[] Question: Does the paper specify all the training and test details (e.g., data splits, hyperparameters, how they were chosen, type of optimizer, etc.) necessary to understand the results?
    \item[] Answer: \answerYes{} 
    \item[] Justification: We have shown our experimental settings and implementation details in Appendix \ref{app:exp}.
    \item[] Guidelines:
    \begin{itemize}
        \item The answer NA means that the paper does not include experiments.
        \item The experimental setting should be presented in the core of the paper to a level of detail that is necessary to appreciate the results and make sense of them.
        \item The full details can be provided either with the code, in appendix, or as supplemental material.
    \end{itemize}

\item {\bf Experiment statistical significance}
    \item[] Question: Does the paper report error bars suitably and correctly defined or other appropriate information about the statistical significance of the experiments?
    \item[] Answer: \answerYes{} 
    \item[] Justification: Our reported results are all averaged over several random seeds. We have reported the standard deviation of the results in \Cref{tab:text_table}.
    \item[] Guidelines:
    \begin{itemize}
        \item The answer NA means that the paper does not include experiments.
        \item The authors should answer "Yes" if the results are accompanied by error bars, confidence intervals, or statistical significance tests, at least for the experiments that support the main claims of the paper.
        \item The factors of variability that the error bars are capturing should be clearly stated (for example, train/test split, initialization, random drawing of some parameter, or overall run with given experimental conditions).
        \item The method for calculating the error bars should be explained (closed form formula, call to a library function, bootstrap, etc.)
        \item The assumptions made should be given (e.g., Normally distributed errors).
        \item It should be clear whether the error bar is the standard deviation or the standard error of the mean.
        \item It is OK to report 1-sigma error bars, but one should state it. The authors should preferably report a 2-sigma error bar than state that they have a 96\% CI, if the hypothesis of Normality of errors is not verified.
        \item For asymmetric distributions, the authors should be careful not to show in tables or figures symmetric error bars that would yield results that are out of range (e.g. negative error rates).
        \item If error bars are reported in tables or plots, The authors should explain in the text how they were calculated and reference the corresponding figures or tables in the text.
    \end{itemize}

\item {\bf Experiments compute resources}
    \item[] Question: For each experiment, does the paper provide sufficient information on the computer resources (type of compute workers, memory, time of execution) needed to reproduce the experiments?
    \item[] Answer: \answerYes{} 
    \item[] Justification: We have provided the compute resource in Appendix \ref{app:exp}.
    \item[] Guidelines:
    \begin{itemize}
        \item The answer NA means that the paper does not include experiments.
        \item The paper should indicate the type of compute workers CPU or GPU, internal cluster, or cloud provider, including relevant memory and storage.
        \item The paper should provide the amount of compute required for each of the individual experimental runs as well as estimate the total compute. 
        \item The paper should disclose whether the full research project required more compute than the experiments reported in the paper (e.g., preliminary or failed experiments that didn't make it into the paper). 
    \end{itemize}
    
\item {\bf Code of ethics}
    \item[] Question: Does the research conducted in the paper conform, in every respect, with the NeurIPS Code of Ethics \url{https://neurips.cc/public/EthicsGuidelines}?
    \item[] Answer: \answerYes{} 
    \item[] Justification: The research conducted in the paper conforms, in every respect, with the NeurIPS Code of Ethics.
    \item[] Guidelines:
    \begin{itemize}
        \item The answer NA means that the authors have not reviewed the NeurIPS Code of Ethics.
        \item If the authors answer No, they should explain the special circumstances that require a deviation from the Code of Ethics.
        \item The authors should make sure to preserve anonymity (e.g., if there is a special consideration due to laws or regulations in their jurisdiction).
    \end{itemize}

\item {\bf Broader impacts}
    \item[] Question: Does the paper discuss both potential positive societal impacts and negative societal impacts of the work performed?
    \item[] Answer: \answerYes{} 
    \item[] Justification: We have discussed both potential positive societal impacts and negative societal impacts of the work in the Broader Impact Section.
    \item[] Guidelines:
    \begin{itemize}
        \item The answer NA means that there is no societal impact of the work performed.
        \item If the authors answer NA or No, they should explain why their work has no societal impact or why the paper does not address societal impact.
        \item Examples of negative societal impacts include potential malicious or unintended uses (e.g., disinformation, generating fake profiles, surveillance), fairness considerations (e.g., deployment of technologies that could make decisions that unfairly impact specific groups), privacy considerations, and security considerations.
        \item The conference expects that many papers will be foundational research and not tied to particular applications, let alone deployments. However, if there is a direct path to any negative applications, the authors should point it out. For example, it is legitimate to point out that an improvement in the quality of generative models could be used to generate deepfakes for disinformation. On the other hand, it is not needed to point out that a generic algorithm for optimizing neural networks could enable people to train models that generate Deepfakes faster.
        \item The authors should consider possible harms that could arise when the technology is being used as intended and functioning correctly, harms that could arise when the technology is being used as intended but gives incorrect results, and harms following from (intentional or unintentional) misuse of the technology.
        \item If there are negative societal impacts, the authors could also discuss possible mitigation strategies (e.g., gated release of models, providing defenses in addition to attacks, mechanisms for monitoring misuse, mechanisms to monitor how a system learns from feedback over time, improving the efficiency and accessibility of ML).
    \end{itemize}
    
\item {\bf Safeguards}
    \item[] Question: Does the paper describe safeguards that have been put in place for responsible release of data or models that have a high risk for misuse (e.g., pretrained language models, image generators, or scraped datasets)?
    \item[] Answer: \answerNA{} 
    \item[] Justification: The paper poses no such risks.
    \item[] Guidelines:
    \begin{itemize}
        \item The answer NA means that the paper poses no such risks.
        \item Released models that have a high risk for misuse or dual-use should be released with necessary safeguards to allow for controlled use of the model, for example by requiring that users adhere to usage guidelines or restrictions to access the model or implementing safety filters. 
        \item Datasets that have been scraped from the Internet could pose safety risks. The authors should describe how they avoided releasing unsafe images.
        \item We recognize that providing effective safeguards is challenging, and many papers do not require this, but we encourage authors to take this into account and make a best faith effort.
    \end{itemize}

\item {\bf Licenses for existing assets}
    \item[] Question: Are the creators or original owners of assets (e.g., code, data, models), used in the paper, properly credited and are the license and terms of use explicitly mentioned and properly respected?
    \item[] Answer: \answerYes{} 
    \item[] Justification: The datasets we used in the paper are all public datasets. Please refer to \Cref{app:exp} for details of datasets.
    \item[] Guidelines:
    \begin{itemize}
        \item The answer NA means that the paper does not use existing assets.
        \item The authors should cite the original paper that produced the code package or dataset.
        \item The authors should state which version of the asset is used and, if possible, include a URL.
        \item The name of the license (e.g., CC-BY 4.0) should be included for each asset.
        \item For scraped data from a particular source (e.g., website), the copyright and terms of service of that source should be provided.
        \item If assets are released, the license, copyright information, and terms of use in the package should be provided. For popular datasets, \url{paperswithcode.com/datasets} has curated licenses for some datasets. Their licensing guide can help determine the license of a dataset.
        \item For existing datasets that are re-packaged, both the original license and the license of the derived asset (if it has changed) should be provided.
        \item If this information is not available online, the authors are encouraged to reach out to the asset's creators.
    \end{itemize}

\item {\bf New assets}
    \item[] Question: Are new assets introduced in the paper well documented and is the documentation provided alongside the assets?
    \item[] Answer: \answerNA{} 
    \item[] Justification: The paper does not release new assets.
    \item[] Guidelines:
    \begin{itemize}
        \item The answer NA means that the paper does not release new assets.
        \item Researchers should communicate the details of the dataset/code/model as part of their submissions via structured templates. This includes details about training, license, limitations, etc. 
        \item The paper should discuss whether and how consent was obtained from people whose asset is used.
        \item At submission time, remember to anonymize your assets (if applicable). You can either create an anonymized URL or include an anonymized zip file.
    \end{itemize}

\item {\bf Crowdsourcing and research with human subjects}
    \item[] Question: For crowdsourcing experiments and research with human subjects, does the paper include the full text of instructions given to participants and screenshots, if applicable, as well as details about compensation (if any)? 
    \item[] Answer: \answerNA{} 
    \item[] Justification: The paper does not involve crowdsourcing nor research with human subjects.
    \item[] Guidelines:
    \begin{itemize}
        \item The answer NA means that the paper does not involve crowdsourcing nor research with human subjects.
        \item Including this information in the supplemental material is fine, but if the main contribution of the paper involves human subjects, then as much detail as possible should be included in the main paper. 
        \item According to the NeurIPS Code of Ethics, workers involved in data collection, curation, or other labor should be paid at least the minimum wage in the country of the data collector. 
    \end{itemize}

\item {\bf Institutional review board (IRB) approvals or equivalent for research with human subjects}
    \item[] Question: Does the paper describe potential risks incurred by study participants, whether such risks were disclosed to the subjects, and whether Institutional Review Board (IRB) approvals (or an equivalent approval/review based on the requirements of your country or institution) were obtained?
    \item[] Answer: \answerNA{} 
    \item[] Justification: The paper does not involve crowdsourcing nor research with human subjects.
    \item[] Guidelines:
    \begin{itemize}
        \item The answer NA means that the paper does not involve crowdsourcing nor research with human subjects.
        \item Depending on the country in which research is conducted, IRB approval (or equivalent) may be required for any human subjects research. If you obtained IRB approval, you should clearly state this in the paper. 
        \item We recognize that the procedures for this may vary significantly between institutions and locations, and we expect authors to adhere to the NeurIPS Code of Ethics and the guidelines for their institution. 
        \item For initial submissions, do not include any information that would break anonymity (if applicable), such as the institution conducting the review.
    \end{itemize}

\item {\bf Declaration of LLM usage}
    \item[] Question: Does the paper describe the usage of LLMs if it is an important, original, or non-standard component of the core methods in this research? Note that if the LLM is used only for writing, editing, or formatting purposes and does not impact the core methodology, scientific rigorousness, or originality of the research, declaration is not required.
    \item[] Answer: \answerNA{} 
    \item[] Justification: The core method development in this research does not involve LLMs as any important, original, or non-standard components.
    \item[] Guidelines:
    \begin{itemize}
        \item The answer NA means that the core method development in this research does not involve LLMs as any important, original, or non-standard components.
        \item Please refer to our LLM policy (\url{https://neurips.cc/Conferences/2025/LLM}) for what should or should not be described.
    \end{itemize}

\end{enumerate}

\end{document}